\numberwithin{equation}{section}
\newcommand{\overbar}[1]{\mkern 1.5mu\overline{\mkern-1.5mu#1\mkern-1.5mu}\mkern 1.5mu}
\newcommand{\pred}[1]{\boldsymbol{1}\left\{#1\right\}}
\newcommand{\R}{\mathbb{R}}
\newcommand{\N}{\mathbb{N}}
\newcommand{\zero}{\boldsymbol{0}}
\newcommand{\trn}{^\intercal}
\newcommand{\inv}{^{-1}}
\DeclareMathOperator{\diag}{diag}
\newcommand{\expo}[1]{\exp\left( #1 \right)}
\newcommand{\hide}[1]{}
\newcommand{\abs}[1]{\left| #1 \right|}
\newcommand{\paren}[1]{\left( #1 \right)}
\newcommand{\nrm}[1]{\left\Vert #1 \right\Vert}
\newcommand{\vertiii}[1]{{\left\vert\kern-0.25ex\left\vert\kern-0.25ex\left\vert #1 
    \right\vert\kern-0.25ex\right\vert\kern-0.25ex\right\vert}}
\newcommand{\tv}[1]{\nrm{#1}_{\mathsf{TV}}}
\newcommand{\TV}[1]{\vertiii{#1}}
\newcommand{\dkaz}[1]{\mathrm{Dist}_{\sqrt{\phantom{x}}}\left( #1 \right)}
\newcommand{\M}{\mathcal{M}}
\newcommand{\mc}{\boldsymbol{M}}
\newcommand{\bpi}{\boldsymbol{\pi}}
\newcommand{\bmu}{\boldsymbol{\mu}}
\newcommand{\bsigma}{\boldsymbol{\sigma}}
\newcommand{\bnu}{\boldsymbol{\nu}}
\newcommand{\refmc}{\overbar\mc}
\newcommand{\sg}{\gamma}
\newcommand{\asg}{\gamma_\star}
\newcommand{\pssg}{\sg_{\mathsf{ps}}}
\newcommand{\tmix}{t_{\mathsf{mix}}}
\newcommand{\X}{\boldsymbol{X}}
\newcommand{\PR}[2][]{\mathbf{P}_{#1}\left( #2 \right)}
\newcommand{\PN}[1]{\mathbf{P}_0\left( #1 \right)}
\newcommand{\PA}[1]{\mathbf{P}_1\left( #1 \right)}
\newcommand{\E}[2][]{\mathbf{E}_{#1}\left[ #2 \right]}
\newcommand{\Var}[2][]{\mathbf{Var}_{#1} \left[ #2 \right]}
\newcommand{\dist}{\boldsymbol{D}}
\newcommand{\refdist}{\overbar{\dist}}
\newcommand{\T}{\mathcal{T}}
\newcommand{\eps}{\varepsilon}
\DeclareMathOperator{\Geometric}{Geometric}
\newcommand{\IID}{_{\textrm{{\tiny \textup{IID}}}}}
\newcommand{\learn}{_{\textrm{{\tiny \textup{LEARN}}}}}
\newcommand{\ub}{_{\textrm{{\tiny \textup{UB}}}}}
\newcommand{\lb}{_{\textrm{{\tiny \textup{LB}}}}}
\newcommand{\G}{\mathcal{G}}
\newcommand{\gn}{|}
\newcommand{\vn}{\boldsymbol{n}}
\newcommand{\thit}[1]{U_{#1}}
\newcommand{\tmultj}[2]{H_{#1}^{(#2)}}
\newcommand{\refeta}{\bar{\etab}}
\newcommand{\beq}{\begin{eqnarray*}}
\newcommand{\eeq}{\end{eqnarray*}}
\newcommand{\beqn}{\begin{eqnarray}}
\newcommand{\eeqn}{\end{eqnarray}}
\newcommand{\etab}{\boldsymbol{\eta}}
\newcommand{\x}{\boldsymbol{x}}
\newcommand{\U}{\boldsymbol{U}}
\newcommand{\hcover}{_{\textrm{{\tiny \textup{COVER}}/2}}}
\newcommand{\eqdef}{\doteq}
\newcommand{\pimin}{\pi_\star}
\newcommand{\bigO}{\mathcal{O}}
\newcommand{\IO}{_{\textrm{{\tiny \textup{IO}}}}}
\newcommand{\bpp}{_{\textrm{{\tiny \textup{BPP}}}}}
\newcommand{\fix}{_{\textrm{{\tiny \textup{FIX}}}}}
\newcommand{\Cmc}{C_{\textrm{{\tiny \textup{MC}}}}}
\newcommand{\hcliq}{_{\textrm{{\tiny \textup{CLIQ}}/2}}}
\newcommand{\thalf}{T\hcliq}
\newcommand{\btau}{\boldsymbol{\tau}}
\renewcommand{\H}{\mathcal{H}}
\newcommand{\set}[1]{\left\{ #1 \right\}}
\newtheorem{theorem}{Theorem}[section]
\newtheorem{lemma}{Lemma}[section]
\newtheorem{definition}{Definition}[section]
\newtheorem{remark}{Remark}[section]
\newenvironment{proof}{\paragraph{Proof:}}{\hfill$\square$}
\title{Minimax Testing of Identity \\ to a Reference Ergodic Markov Chain}
\author{Geoffrey Wolfer \\ \texttt{geoffrey@post.bgu.ac.il} \and Aryeh Kontorovich \\ \texttt{karyeh@cs.bgu.ac.il}}
\begin{document}

\maketitle

\begin{abstract}%
We exhibit an efficient procedure for testing,
based on a single long state sequence,
whether an unknown Markov chain is identical to or $\eps$-far from a given reference chain.
We obtain nearly matching (up to logarithmic factors) upper and lower
sample complexity bounds for our notion of distance, which is based on total variation.
Perhaps surprisingly,
we discover that the sample complexity depends
solely on the
properties of the known reference chain and does not
involve
the unknown chain at all,
which is
not even assumed to be ergodic.

\end{abstract}

\section{Introduction}
\label{section:introduction}
Distinguishing whether an unknown distribution $\dist$ is identical to a reference one
$\refdist$ or is $\eps$-far from it in total variation (TV)
is a
special case
of statistical
\emph{property testing}.
For the iid case, it
is known
that a sample of size
$\tilde \Theta(\sqrt d/\eps^2)$,
where $d$ is the support size,
is both sufficient and necessary \citep{batu2001testing, valiant2017automatic}.
This is in contradistinction to the corresponding learning problem,
with the considerably higher
sample complexity
of
$\tilde \Theta(d/ \eps^2)$
(see, e.g., \citet{MR1741038, DBLP:conf/innovations/Waggoner15, DBLP:journals/corr/KontorovichP16}).
The Markovian setting has so far received no attention
in the property testing framework,
with
the
notable
exception of the recent work of \citet{daskalakis2017testing} and \citet{cherapanamjeri2019testing}, 
which we discuss in greater detail in Section~\ref{section:related-work}.
\citeauthor{daskalakis2017testing} ``initiate[d] the study of Markov chain testing'',
but
imposed the stringent constraint of being symmetric on both the 
reference and unknown chains.
In this paper,
we
only require ergodicity of the reference chain,
and
make no assumptions
on the unknown one
---
from which the tester receives a single long trajectory of observations
---
other than it having $d$ states.

\paragraph{Our contribution.}
We prove
nearly
matching upper and lower bounds on the sample complexity of the
testing problem in terms of the accuracy $\eps$ and the number of states $d$,
as well as parameters derived from the
stationary distribution and mixing.
We discover that for testing, only the
reference chain
affects the sample complexity,
and no assumptions (including ergodicity) need be made on the unknown one.
In particular, we exhibit
an efficient testing procedure, which, given
a Markovian sequence of length
\begin{equation}
\begin{split}
\label{eq:minimax-sample-complexity}
m =  \tilde{\bigO}\left( \frac{1}{\pimin} \max \set{  \frac{\sqrt{d}}{\eps^2}  , \tmix } \right)
,
\end{split}
\end{equation}
correctly identifies the unknown chain
with high probability,
where $\pimin$ and $\tmix$ are, respectively,
the minimum stationary probability and mixing time of the \emph{known} reference
chain.
We also derive an instance-specific version of the previous bound:
the factor
$\frac{\sqrt{d}}{\pimin}$ in \eqref{eq:minimax-sample-complexity}
can be replaced with the potentially much smaller quantity
$\max_{i\in[d]} \set{ \bpi(i)^{-1}  \nrm{\refmc(i, \cdot)}_{2/3} }$,
defined in \eqref{eq:2/3-def}.
Additionally, we construct two separate worst-case lower bounds of
$\Omega \left( d \tmix \right)$ and $\Omega \left( \frac{\sqrt{d}}{\pimin \eps^2} \right)$, exhibiting a
regime for which our testing procedure is unimprovable.

\section{Related work}
\label{section:related-work}
We consider {\em distribution testing} in the {\em property testing}
(within the more classical statistical {\em hypothesis testing}\footnote{A recent result in this vein is \citet{barsotti2016hypothesis}; see also references therein. }) framework --- a research program initiated by \citet{DBLP:conf/focs/BatuFRSW00}.

The special case of iid uniformity testing was addressed (for various metrics) by \citet{goldreich2011testing, paninski2008coincidence}. 
Extensions to iid identity testing for arbitrary finite distributions were then obtained \citep{goldreich2016uniform,diakonikolas2016collision}, including the instance-optimal tester of \citet{valiant2017automatic}, who showed that $\sqrt{d}$ may be replaced with
$\nrm{\refdist}_{2/3}$, the $(2/3)$-pseudo-norm of the reference distribution.

To our knowledge, \citet{daskalakis2017testing} were the first to consider the testing problem for Markov chains (see references therein for previous works addressing goodness-of-fit testing under Markov dependence). Their model is based on the pseudo-distance $\dkaz{\cdot, \cdot}$
defined
by
\citet{kazakos1978bhattacharyya}
as
\begin{equation*}
\begin{split}
  \dkaz{\mc, \mc'}
  = 1 - \rho\left( \left[ \mc , \mc' \right]_{\sqrt{\phantom{x}}} \right),
\end{split}
\end{equation*}
where $\paren{\left[ \mc , \mc' \right]_{\sqrt{\phantom{x}}}}_{(i,j)} = \sqrt{\mc(i,j)\mc'(i,j)} $ is the term-wise geometric mean of the transition kernels and $\rho$ is the largest eigenvalue in magnitude. This pseudo-distance has the property of vanishing on pairs of chains sharing an identical connected component.
\citeauthor{daskalakis2017testing}'s
sample complexity upper bound of
$\tilde{\bigO} \left( \mathrm{HitT}_{\refmc} + {d}/{\eps} \right)$ required knowledge of the hitting time of the reference chain, while their lower bound $\Omega( {d}/{\eps})$ involves no quantities related to the mixing rate at all.
The authors conjectured
that for $\dkaz{\cdot}$,
the correct
sample complexity is $\Theta ({d}/{\eps})$ --- i.e., independent of the mixing properties of the chain.
This conjecture was recently partially proven by \citet{cherapanamjeri2019testing},
who gave an upper
bound of $\tilde \bigO (d/\eps^4)$,
without dependence on
the hitting time.

The present paper compares favorably with \citeauthor{daskalakis2017testing} in that the latter requires both the reference and the unknown chains to be symmetric (and, a fortiori, reversible) as well as ergodic. We only require ergodicity of the reference chain and assume nothing about the unknown one. 

Additionally,
we obtain nearly sharp sample complexity bounds
in terms of the reference chain's mixing properties.
Finally,
our metric $\TV{\cdot}$ dominates the pseudo-metric $\dkaz{\cdot}$,
and
hence their identity testing problem is reducible to ours
(see Lemma~\ref{lemma:comparison-daskalakis} in the Appendix),
although the reduction does not preserve the convergence rate.

We note that the corresponding PAC-type learning problem for Markov chains was only recently
introduced in \citet{hao2018learning, wolfer2018minimax}.
The present paper uses the same notion of distance as the latter work, in which
the minimax sample complexity for learning was shown to be
of the order of $m = \tilde \Theta \left( \frac{1}{\pimin} \max \set{ \frac{d}{\eps^2}, \tmix } \right)$.
Our present results
confirm the intuition that
identity testing
is, statistically speaking, a significantly less demanding task than learning:
the former exhibits
a quadratic reduction in the bound's dependence on $d$ over the latter.

\section{Definitions and notation}
\label{section:definition-notation}
We define $[d] \eqdef \set{1,\ldots,d}$,
denote
the simplex of all distributions over $[d]$
by $\Delta_d$, and the collection of all $d\times d$ row-stochastic matrices by $\M_d$.
For $\bmu\in\Delta_d$, we will write either $\bmu(i)$ or $\mu_i$, as dictated by convenience. All vectors are rows unless indicated otherwise. 
For $n \in \N$, and any $\dist \in \Delta_d$ we also consider its $n$-fold \emph{product} $\dist^{\otimes n}$, i.e. $(X_1, \dots, X_n) \sim \dist^{\otimes n}$ is a shorthand for $(X_i)_{i \in [n]}$ being all mutually independent, and such that $\forall i \in [n], X_i \sim \dist$. 
A Markov chain on $d$ states being entirely specified by an initial distribution $\bmu\in\Delta_d$ and a row-stochastic transition matrix $\mc\in\M_d$, we identify the chain with the pair $(\mc,\bmu)$. Namely, writing $X_1^m$ for $(X_1, \dots, X_m)$, by $X_1^m \sim(\mc,\bmu)$, we mean that
\begin{equation*}
\PR{X_1^m = x_1^m }=\bmu(x_1)\prod_{t=1}^{m-1}\mc(x_t,x_{t+1}).
\end{equation*}
We write $\PR[\mc,\mu]{\cdot}$ to denote probabilities over sequences induced by the Markov chain $(\mc,\bmu)$,
and omit the subscript when it is clear from context. 
Taking
the null hypothesis
to be that $\mc=\refmc$ (i.e., the chain being tested is identical to the reference one),
$\PN{\cdot}$ will denote probability
in the completeness case, and $\PA{\cdot}$ in the soundness case.
The Markov chain $(\mc,\bmu)$ is {\em stationary} if $\bmu=\bpi$ for $\bpi=\bpi\mc$, and {\em ergodic} if
$\mc^k
>
0$ (entry-wise positive) for some $k\ge1$. If $\mc$ is ergodic,
it has a unique stationary distribution $\bpi$ and moreover the \emph{minimum stationary probability} $\pimin>0$, where
\begin{equation}
\label{eq:pistar-def}
\pimin \eqdef \min_{i\in[d]}\bpi(i).
\end{equation}
Unless noted otherwise, $\bpi$ is assumed to be the stationary distribution of the Markov chain in context.
The {\em mixing time} of a chain
is defined
as the number of
steps
necessary
for
its
state distribution
to be
sufficiently close to the stationary one
(traditionally taken to be within $1/4$):
\begin{equation}
\label{definition:tmix}
\tmix \eqdef \inf \set{t \geq 1 : \sup_{\bmu \in \Delta_{d}} \tv{\bmu \mc^{t} - \bpi} \leq \frac{1}{4} }
.
\end{equation}
We use the standard $\ell_1$ norm $\nrm{z}_1=\sum_{i\in[d]}|z_i|$, which, in the context of distributions (and up to a convention-dependent factor of $2$) corresponds to the total variation norm.
For $\boldsymbol{A}\in\R^{d\times d}$, define
\begin{equation*}
\label{eq:normdef}
\TV{\boldsymbol{A}} \eqdef
\frac{1}{2} \max_{i\in[d]}\tv{\boldsymbol{A}(i,\cdot)}.
\end{equation*}
Finally, we use standard $\bigO(\cdot)$, $\Omega(\cdot)$ and $\Theta(\cdot)$ order-of-magnitude notation, as well as their tilde variants $\tilde \bigO(\cdot)$, $\tilde\Omega(\cdot)$, $\tilde\Theta(\cdot)$ where lower-order log factors
in any parameter
are suppressed.
\begin{definition}
  An $(\eps,\delta)$-{\em identity tester} $\T$ for Markov chains with sample complexity function $m_0(\cdot)$ is an algorithm that takes as input a reference Markov chain $(\refmc, \bar{\bmu})$ and $\X=(X_1,\ldots,X_m)$ drawn from some unknown Markov chain $(\mc, \bmu)$, and outputs $\T=\T(d, \eps, \delta, \refmc, \bar{\bmu}, \X) \in \set{0,1}$   such that for $m \ge m_0(d, \eps,\delta,\refmc,\bar{\bmu})$, both $\mc=\refmc \Rightarrow \T =0 $ and $\TV{\mc-\refmc}>\eps \Rightarrow \T =1$ hold with probability at least $1-\delta$.
  (The probability is over the draw of $\X$ and  any internal randomness of the tester.)
\end{definition}
\noindent

\section{Formal results}
\label{section:formal-results}
Since the focus of this paper is on statistical
rather than computational complexity, we defer the (straightforward) analysis of the runtimes of our tester to the Appendix, Section~\ref{section:computational-complexity}.

\begin{theorem}[Upper bound]
\label{theorem:test-ub}
There exists an $(\eps,\delta)$-identity tester $\T$ (provided at Algorithm~\ref{algorithm:test}), which, for all $0 < \eps < 2$, $0 < \delta < 1 $, satisfies the following. If $\T$ receives as input a $d$-state ``reference'' ergodic Markov chain
$(\refmc,\bar{\bmu})$, 
as well as a sequence $\X=(X_1,\ldots,X_m)$ of length at least $m \ub$, drawn according to an unknown chain $\mc$ (starting from an arbitrary state), then it outputs $\T=\T(d, \eps, \delta, \refmc, \bar{\bmu}, \X) \in \set{0,1}$ such that
\begin{equation*}
\begin{split}
\mc = \refmc &\implies \T = 0 \qquad \\
\qquad \TV{\mc-\refmc}>\eps &\implies \T =1
\end{split}
\end{equation*}
holds with probability at least $1-\delta$. The sample complexity is upper-bounded by
\begin{equation*}
\begin{split}
  m \ub
  & = \frac{\Cmc}{\pimin} \max \set{ \frac{\sqrt{d}}{\eps^2}   \ln{\left(\frac{d}{\delta \eps}\right)} ,
    \tmix \ln{\left( \frac{d }{\delta \pimin} \right)} } \\
  & = \tilde{\bigO} \left( \frac{1}{\pimin} \max \set{ \frac{\sqrt{d}}{\eps^2}, \tmix } \right),
\end{split}
\end{equation*}
where
$\Cmc$ is a universal constant, and $\tmix$ and $\pimin$ are respectively the \emph{mixing time} \eqref{definition:tmix} and the minimum stationary probability \eqref{eq:pistar-def} of $\refmc$.
\end{theorem}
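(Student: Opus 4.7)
The plan is to reduce the Markovian identity testing problem to $d$ parallel instances of iid identity testing, one per row of the transition kernel. By the strong Markov property, the successors of consecutive visits to any state $i$ form an iid sequence with common law $\mc(i,\cdot)$, so if we can collect at least $n^\star \eqdef \tilde{\bigO}(\sqrt{d}/\eps^2)$ such successors for every $i$, we may invoke an off-the-shelf iid identity tester (at confidence $\delta/(2d)$) to decide whether $\mc(i,\cdot) = \refmc(i,\cdot)$ against the alternative $\nrm{\mc(i,\cdot) - \refmc(i,\cdot)}_1 > 2\eps$, and then union-bound over the $d$ rows. Concretely, the tester would run the chain for $m$ steps, record the visit counts $N_i$, output $1$ immediately if $N_i < n^\star$ for some $i$, and otherwise feed the first $n^\star$ successors of each state into the per-row iid tester, returning $1$ iff any row-test rejects.

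For completeness, under $\mc=\refmc$ the trajectory is generated by the ergodic reference chain, so a Chernoff-type deviation inequality for additive functionals of ergodic Markov chains (starting from an arbitrary state, with rate governed by $\tmix$) gives $N_i \geq m\bar{\bpi}(i)/2 \geq m\pimin/2$ for all $i\in[d]$ simultaneously, with probability $\geq 1-\delta/2$, provided $m = \tilde{\Omega}(\tmix/\pimin)$. Adding $m\pimin/2 \geq n^\star$, equivalently $m = \tilde{\Omega}(\sqrt{d}/(\pimin\eps^2))$, guarantees every row-test receives the $n^\star$ iid samples it needs, and a union bound across the $d$ per-row tests (each correct with probability $\geq 1-\delta/(2d)$) yields total failure probability at most $\delta$. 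The two terms in the claimed $\max\{\cdot,\cdot\}$ correspond exactly to the mixing and per-row sampling requirements.

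For soundness, fix any row $i^\star$ with $\nrm{\mc(i^\star,\cdot)-\refmc(i^\star,\cdot)}_1 > 2\eps$ (such an $i^\star$ exists by the definition of $\TV{\cdot}$). Either $N_{i^\star} < n^\star$, in which case the initial check outputs $1$ directly, or the first $n^\star$ successors of state $i^\star$ are iid from $\mc(i^\star,\cdot)$ by the strong Markov property, and the iid identity tester correctly rejects with probability $\geq 1 - \delta/(2d) \geq 1-\delta$. Crucially, this case-analysis never invokes ergodicity of $\mc$, matching the theorem's minimal assumption on the alternative chain.

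The main obstacle will be the Markov-chain concentration inequality used in the completeness step: we need a deviation bound for $N_i$ that is simultaneously tight in $\tmix$ and $\pimin$, uniform in $i$, and valid from an arbitrary starting state, up to logarithmic factors. A natural route splits the trajectory into consecutive blocks of length $\Theta(\tmix)$, couples each block to its stationary counterpart (paying a small total variation cost per block), and then applies a Bernstein-type inequality. The $\log(d/(\delta\pimin))$ factor in $m\ub$ arises from pushing the mixing error down to scale $\sim \pimin\delta/d$ so as to absorb the state-wise union bound, while the $\log(d/(\delta\eps))$ factor comes from amplifying the per-row iid tester's confidence from constant to $\delta/(2d)$.
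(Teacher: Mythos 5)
Your overall route is the same as the paper's: reduce to $d$ per-row iid identity tests on the successors of each state, reject when some state is under-visited, control the visit counts with a Markov-chain concentration bound (in the paper this is a black-box citation of Paulin's Bernstein-type inequalities, so the step you flag as ``the main obstacle'' is not actually the crux), and amplify each per-row iid tester to confidence of order $\delta/d$ by majority vote, which is where the two logarithmic factors come from. There is, however, a genuine gap at the point where you invoke the iid tester's guarantee ``once every row-test receives its $n^\star$ samples.'' The event $\{N_i \geq n^\star\}$ (enough visits to $i$ within the $m$ observed steps) is correlated with the values of the successor samples themselves --- a successor from which return to $i$ is fast makes the event more likely --- so conditionally on that event the successors are no longer iid with law $\mc(i,\cdot)$, and the statement ``each row-test is correct with probability $\geq 1-\delta/(2d)$'' cannot be applied conditionally. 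This is precisely what the paper identifies as the central technical challenge (conditioning on the number of visits breaks the Markov property). Your soundness paragraph has the same defect: the second branch of your case analysis implicitly bounds $\PA{\text{row-test } i^\star \text{ accepts} \mid N_{i^\star} \geq n^\star}$ by the iid soundness guarantee, which does not follow.

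The repair is the paper's decoupling device: define the row samples through the idealized infinite trajectory (equivalently, couple them to an a priori iid sequence $Y_1, Y_2, \dots \sim \mc(i,\cdot)$ that coincides with the observed successors on the event that enough visits occur), and then bound errors additively and unconditionally: in completeness, $\PN{\T = 1} \leq \PN{\exists i: N_i < n^\star} + \sum_{i} \PN{\text{iid test on } Y \text{ at } i \text{ errs}}$, and in soundness, $\PA{\T = 0} \leq \PA{\text{iid test on } Y \text{ at } i^\star \text{ accepts}}$, never conditioning on visit counts. Given your algorithm's structure (outright rejection when visits are insufficient), this is a short additional argument, and with it your proof matches the paper's proof of the theorem; without it, the key step as written is invalid.
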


\begin{remark}
  An important feature of Theorem~\ref{theorem:test-ub} is that the sample complexity only depends on the (efficiently computable, see Section~\ref{section:computational-complexity}) properties of the known reference chain. No assumptions, such as  symmetry
  (as in \citet{daskalakis2017testing,
    cherapanamjeri2019testing})
  or even ergodicity, are made on the unknown Markov chain, and none of its properties appear in the bound. 
\end{remark}

\begin{remark}
  Our results indicate that in the regime where the $\frac{\tmix}{\pimin}$ term is not dominant, the use of optimized
identity
  iid testers as subroutines confers an $\tilde \bigO(\sqrt d)$-fold improvement over the naive testing-by-learning strategy. 
\end{remark}

\begin{remark}
  We note but do not pursue the fact that the logarithmic dependencies on $\delta$ in our upper bound could
  be improved via a combination of the techniques of \citet{DBLP:journals/corr/abs-1708-02728} and the reduction to uniformity testing of \citet{goldreich2016uniform}.
\end{remark}

\begin{theorem}[Instance-specific upper bound]
\label{theorem:test-ub-instance-optimal}
There exists an $(\eps,\delta)$-identity tester $\T$, which, for all $0 < \eps < 2$, $0 < \delta < 1 $, satisfies the following. If $\T$ receives as input a $d$-state ``reference'' ergodic Markov chain
$(\refmc,\bar{\bmu})$, 
as well as a sequence $\X=(X_1,\ldots,X_m)$ of length at least $m \ub$, drawn according to an unknown chain $\mc$ (starting from an arbitrary state), then it outputs $\T=\T(d, \eps, \delta, \refmc, \bar{\bmu}, \X) \in \set{0,1}$ such that
\begin{equation*}
\begin{split}
\mc=\refmc &\implies \T = 0 \\
\TV{\mc-\refmc}>\eps &\implies \T =1
\end{split}
\end{equation*}
holds with probability at least $1-\delta$. The sample complexity is upper-bounded by
\begin{equation*}
\begin{split}
  m \ub
  & = \tilde{\bigO} \left( \max \set{ \frac{\TV{\refmc}_{\bpi, 2/3}}{\eps^2}, \frac{\tmix}{\pimin} } \right),
\end{split}
\end{equation*}
where $\tmix$ and $\pimin$ are as in Theorem~\ref{theorem:test-ub}, and
\beqn
\label{eq:2/3-def}
\TV{\refmc}_{\bpi, 2/3} \eqdef \max_{i\in[d]} \set{
  \frac{\paren{\sum_{j\in[d]} \refmc(i,j)^{2/3}}^{3/2}}{\bpi(i)}}
.
\eeqn
\end{theorem}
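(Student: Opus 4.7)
The strategy mirrors the scaffold of Theorem~\ref{theorem:test-ub}: reduce to parallel per-row iid identity tests, and swap the generic $\sqrt{d}/\eps^2$ subroutine for the instance-optimal iid identity tester of \citet{valiant2017automatic}, whose sample complexity is driven by the $2/3$-pseudo-norm of the reference distribution rather than by $\sqrt{d}$. The per-row overhead then combines with the same Markov concentration overhead as in Theorem~\ref{theorem:test-ub} to yield the stated bound.

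Concretely, I would run the chain for $m$ steps and, for each $i\in[d]$, record the visit count $N_i$ and the empirical successor distribution $\estmc(i,\cdot)$. The key structural fact --- which requires \emph{no} assumption on $\mc$, ergodic or otherwise --- is that, conditional on $(N_i)_{i\in[d]}$, the successors of the visits to $i$ form $N_i$ iid samples from $\mc(i,\cdot)$ (the standard strong-Markov decomposition). The tester then (i)~rejects if any $N_i < \bpi(i)m/2$; (ii)~for every $i$, runs the Valiant--Valiant $(\eps,\delta/(2d))$-identity tester comparing $\estmc(i,\cdot)$ to $\refmc(i,\cdot)$ on its $N_i$ successor samples; (iii)~accepts iff every per-row test accepts.

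For completeness ($\mc=\refmc$), the very same Markov concentration inequalities underlying Theorem~\ref{theorem:test-ub} give $N_i \ge \bpi(i)m/2$ simultaneously for all $i$ with probability $\ge 1-\delta/2$ once $m \gtrsim (\tmix/\pimin)\ln(d/\delta)$, and a union bound over the $d$ Valiant--Valiant subroutines supplies the remaining $\delta/2$. For soundness, if $\TV{\mc-\refmc}>\eps$ then some row $i^\star$ satisfies $\tv{\mc(i^\star,\cdot)-\refmc(i^\star,\cdot)} > \eps$; either step~(i) rejects because $N_{i^\star} < \bpi(i^\star)m/2$, or else $N_{i^\star} \ge \bpi(i^\star)m/2$ and the conditional-iid decomposition lets the Valiant--Valiant guarantee detect the row discrepancy, provided
\begin{equation*}
\bpi(i^\star)\,m/2 \;\gtrsim\; \frac{\paren{\sum_{j\in[d]}\refmc(i^\star,j)^{2/3}}^{3/2}}{\eps^2}\,\ln(d/\delta).
\end{equation*}
Maximizing over $i^\star$ and invoking the definition~\eqref{eq:2/3-def} of $\TV{\refmc}_{\bpi,2/3}$ recovers exactly the $\TV{\refmc}_{\bpi,2/3}/\eps^2$ term, up to the logarithmic factors absorbed into $\tilde{\bigO}$; the Markov concentration of step~(i) contributes the $\tmix/\pimin$ term.

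The main obstacle I expect is reconciling the iid Valiant--Valiant guarantee with the fact that the per-row sample size $N_{i^\star}$ is data-dependent and that $\mc$ is not even assumed ergodic. The former is handled by conditioning on $(N_i)_{i\in[d]}$ and, if necessary, truncating each per-row input to a deterministic subsample of size $\lceil \bpi(i)m/2 \rceil$ to obtain a genuinely iid input of fixed length (iid identity testers enjoy the elementary monotonicity that extra samples never hurt). The latter is handled by observing that all the concentration inputs invoke only the reference chain $\refmc$, while the iid decomposition itself holds unconditionally for $\mc$. A secondary technicality is tuning the constant in the $\bpi(i)m/2$ threshold of step~(i) so that its slack matches the concentration inequalities imported from Theorem~\ref{theorem:test-ub} and the sample-complexity constants required by Valiant--Valiant; this is routine bookkeeping.
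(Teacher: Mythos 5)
Your overall architecture is the paper's: the proof of Theorem~\ref{theorem:test-ub-instance-optimal} is exactly the proof of Theorem~\ref{theorem:test-ub} with the generic $\sqrt{d}/\eps^2$ iid subroutine replaced by the instance-optimal tester of Lemma~\ref{lemma:fixed-iid-tester-instance-optimal} (amplified as in Lemma~\ref{lemma:fixed-bpp-tester}), and the only new requirement is that $\tfrac{1}{2}(m-1)\bpi(i)$ exceed a constant times $\nrm{\refmc(i,\cdot)}_{2/3}/\eps^2$ for every $i$, which is what produces the $\TV{\refmc}_{\bpi,2/3}/\eps^2$ term, while the visit-count concentration contributes $\tmix/\pimin$. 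However, your declared ``key structural fact'' --- that \emph{conditional on} $(N_i)_{i\in[d]}$ the successors of the visits to $i$ are $N_i$ iid draws from $\mc(i,\cdot)$ --- is false in general, and the paper flags precisely this as the central technical obstacle: conditioning on the number of visits introduces dependencies and breaks the Markov property (conditioning on a large $N_i$, say, biases the observed successors of $i$ toward states from which the chain returns to $i$ quickly). Your fallback of truncating each row to a deterministic subsample of size $\lceil \bpi(i)m/2\rceil$ is the right instinct, but as you phrase it the truncation is still performed under the conditioning on $(N_i)$, which does not restore the iid property, so the step as written would fail.

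The correct repair, which is the paper's device, avoids conditioning altogether: fix the deterministic vector $\vn=\tfrac{1}{2}(m-1)\bpi$ in advance and let $\Psi_{\vn}^{(i)}$ extract the first $\vn(i)$ post-visit states of $i$ along the (conceptually infinite) trajectory; by the strong Markov property these are \emph{unconditionally} iid $\mc(i,\cdot)$, with no assumption whatsoever on $\mc$. The finite-sample tester is then related to this idealized one by a covering argument rather than by conditioning: for completeness, $\PN{\mathcal{E}_\infty \cap \mathcal{E}_{\Psi_{\vn}}} \geq \PN{\mathcal{E}_\infty} - \PN{\lnot\mathcal{E}_{\Psi_{\vn}}}$, where the event that some state is visited fewer than $\vn(i)$ times is controlled by the Paulin-type concentration for the \emph{reference} chain (this is where $m \gtrsim (\tmix/\pimin)\ln(d/(\delta\pimin))$ enters); for soundness, one simply drops the well-definedness event and upper bounds the acceptance probability by that of the idealized tester at a bad row $i_0$. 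With this substitution for your conditional-iid claim, the rest of your argument (union bound over rows, thresholding $N_i$ against $(m-1)\bpi(i)/2$, and maximizing the per-row requirement over $i$ to recover $\TV{\refmc}_{\bpi,2/3}$) coincides with the paper's proof.
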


\begin{remark}
  Since we always have $\TV{\refmc}_{\bpi, 2/3}\le\sqrt{d}/\pimin$,
  the instance-specific bound is always at least as sharp as the worst-case one
  in Theorem~\ref{theorem:test-ub}.
  It may, however, be considerably sharper, as illustrated by
  a simple random walk on a $d$-vertex, $\Delta$-regular graph \citep[Section~1.4]{levin2009markov},
  for which the instance-specific bound is
  $\tilde \bigO \left( d \max \set{ \frac{\sqrt{\Delta}}{\eps^2}, \tmix } \right)$
  --- a savings of roughly $\sqrt d$.
\end{remark}

\begin{theorem}[Lower bounds]
\label{theorem:test-lb}
For every $0 < \eps < 1/8$, $\tmix \geq 50$, and $d = 6k$, $k \ge2 $, there exists a $d$-state Markov chain $\refmc$ with mixing time $\tmix$ and stationary distribution $\bpi$ such that every $(\eps,1/10)$-identity tester for reference chain $\refmc$ must require in the worst case a sequence $\X=(X_1,\ldots,X_m)$ drawn from the unknown chain $\mc$ of length at least
\beq 
m \lb = \tilde \Omega \left( \max\left\{ \frac{\sqrt{d}}{\eps^2 \pimin}, d \tmix \right\} \right),
\eeq
where $\tmix, \pimin$ are as in Theorem~\ref{theorem:test-ub}.
\end{theorem}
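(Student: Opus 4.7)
The plan is to establish the two terms of the $\max$ separately via two independent arguments against the same reference chain $\refmc$ (engineered to satisfy the prescribed $\tmix$ and to harbor a rare state of stationary mass $\pimin$). For the $\tilde{\Omega}(\sqrt{d}/(\pimin \eps^2))$ bound I would reduce from iid uniformity testing. Construct $\refmc$ on $\{0, 1, \ldots, d-1\}$ so that (i) $\bpi(0) = \pimin$, (ii) the row $\refmc(0,\cdot)$ is uniform over $\{1, \ldots, d-1\}$, and (iii) the remaining rows are chosen to enforce the stated mixing time and ergodicity. The family of alternatives coincides with $\refmc$ on every row except the zeroth, which is replaced by a distribution $\eps$-far from uniform in TV; by the definition of $\TV{\cdot}$ this ensures $\TV{\mc - \refmc} \geq \eps$. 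Given any iid uniformity tester on $[d-1]$ with sample complexity $n$, I would simulate a length-$m$ Markov trajectory by running $\refmc$'s prescribed transitions everywhere except at visits to state $0$, where successive iid samples from the unknown distribution serve as the next state. A Chernoff bound on $\Binomial(m, \pimin)$ shows that $n = O(m \pimin)$ samples suffice with high probability; applying the optimal iid lower bound $\Omega(\sqrt{d}/\eps^2)$ of Paninski (and Valiant--Valiant) therefore forces $m = \Omega(\sqrt{d}/(\pimin \eps^2))$.

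For the $\tilde{\Omega}(d \tmix)$ bound I would apply Le Cam's two-point method to an adversarial pair $(\refmc, \mc')$ --- both with mixing time $\Theta(\tmix)$ and a common stationary distribution --- satisfying $\TV{\refmc - \mc'} > \eps$ yet $\tv{\PR[\refmc]{X_1^m} - \PR[\mc']{X_1^m}} < 2/5$ whenever $m = o(d \tmix)$. I would control the trajectory TV via Pinsker together with the Markov chain rule,
\begin{equation*}
\kl{\PR[\refmc]{X_1^m}}{\PR[\mc']{X_1^m}} = \kl{\bar\bmu}{\bar\bmu'} + \sum_{t=1}^{m-1} \E[\refmc]{\kl{\refmc(X_t,\cdot)}{\mc'(X_t,\cdot)}}.
\end{equation*}
The construction must achieve per-step averaged KL of order $O(1/(d \tmix))$ while maintaining $\max_i \tv{\refmc(i,\cdot) - \mc'(i,\cdot)} \geq 2\eps$; the customary way is to hide the local TV deviation inside a transition event of probability $O(1/(d \tmix))$, which only becomes statistically visible after $\Omega(d \tmix)$ steps.

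The main obstacle is this second construction: fabricating a single $\refmc$ that simultaneously carries a state of stationary mass $\pimin$, has mixing time exactly $\tmix$, and admits a TV-separated same-stationary sibling $\mc'$ whose trajectory law is $o(1)$-close whenever $m = o(d \tmix)$. A natural candidate is a \emph{cluster bridge} or \emph{dumbbell} construction in which a narrow bottleneck both encodes the adversarial perturbation and tunes $\tmix$, in the spirit of the companion learning lower bound of \citet{wolfer2018minimax}. The divisibility hypothesis $d = 6k$ and the bound $\tmix \geq 50$ in the statement are likely bookkeeping constraints required to realize this construction with explicit constants.
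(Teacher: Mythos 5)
Your first argument (the $\tilde\Omega(\sqrt d/(\eps^2\pimin))$ term) is correct in outline but takes a genuinely different route from the paper. The paper does \emph{not} reduce from iid testing: it runs a Le Cam argument directly on trajectory laws, using a mixture of Paninski-type perturbations $\dist_\bsigma$ placed in the row of a rare state of the family $\G_{p_\star}$, and its main technical work (Lemmas~\ref{lemma:factored-chain-visits}--\ref{lemma:tv-lb-make-product}) is a decoupling device needed because conditioning on the number of visits to the rare state breaks the Markov property. Your forward simulation (consume one fresh iid sample at each visit to the special state, simulate all other rows internally) sidesteps that difficulty entirely, since the trajectory is generated forward and the only loss is the probability that the visit count exceeds the iid budget; combined with the $\Omega(\sqrt d/\eps^2)$ uniformity lower bound this is a legitimate and arguably cleaner argument. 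Two details need care: the visit count to the special state is not $\Binomial(m,\pimin)$ for an arbitrary choice of the remaining rows, so you should fix a concrete structure (e.g.\ every non-special row places mass exactly $p_\star$ on the special state, as in $\G_{p_\star}$, giving geometric inter-visit times and the needed upper tail), and the alternatives must be identical to $\refmc$ off the special row so the simulator can generate those transitions itself.

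The second argument has a genuine gap, and it is not merely the missing construction you flag: the KL-chain-rule/Pinsker two-point method is the wrong tool for the $\Omega(d\,\tmix)$ term. If only the rows of a state $i_\star$ (and its private neighbors) are perturbed by $\eps$ in total variation, then each such row contributes KL of order $\eps^2$, and the per-step averaged KL is (occupancy of $i_\star$) $\times\,\Theta(\eps^2)$. In the regime that makes this term meaningful --- the paper's $\H_\eta$ family is symmetric, so $\pimin=1/d$ and the occupancy of $i_\star$ is $\Theta(1/d)$ --- the trajectory KL grows like $m\eps^2/d$, so requiring it to stay $O(1)$ caps you at $m=O(d/\eps^2)$, never $d\,\tmix$; ``hiding the deviation inside a transition event of probability $O(1/(d\tmix))$'' is impossible without forcing $\pimin=O(1/(d\tmix))$, which collapses the statement back into the first term and defeats the point (the paper's remark that $\pimin=1/d$ shows the $\pimin$-dependence is unavoidable). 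The paper's actual mechanism is not an information inequality at all: in $\H_\eta$ each clique state traps the walker for $\approx 1/\eta\approx\tmix$ steps, so within $m\le d/(120\eta)$ steps at most half the clique is covered with probability $\ge 1/5$ (Lemma~\ref{lemma:half-cover}, via a coupon-collector plus Paley--Zygmund argument); by symmetry $i_\star$ is unvisited with conditional probability $\ge 1/2$ under both hypotheses, and conditioned on $N_{i_\star}=0$ the two trajectory laws are \emph{exactly} equal because the perturbed outer states are reachable only through $i_\star$, so Le Cam gives risk $\ge 1/10$ for $m=\tilde\Omega(d\,\tmix)$. The KL of the two trajectory laws is in fact large in this regime (it is dominated by the constant-probability event of many visits to $i_\star$), which is precisely why a coupling/identical-conditional-law argument is needed and why your proposed route would fail.
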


\begin{remark}
  As the proof shows,
  for any $0 < \pimin < 1/(2(d+1))$, a testing problem can be constructed that
  achieves the $\frac{\sqrt{d}}{\eps^2 \pimin}$ component of the lower bound.
  Moreover, for doubly-stochastic $\refmc$, we have $\pimin = 1/d$, which shows
  that the upper bound cannot be improved in all parameters simultaneously.
\end{remark}

\section{Overview of techniques}
For both upper and lower bounds, we
survey
existing techniques,
describe
their limitations
vis-\`a-vis
our problem,
and
highlight
the key technical challenges as well as our solutions for overcoming these.

\subsection{Upper bounds}

\paragraph{Na\"ive approach: testing-by-learning.}
We mention this approach mainly
to establish
a baseline comparison.
\citet{wolfer2018minimax}
showed
that in order to $(\eps, \delta)$-learn an unknown $d$-state ergodic Markov chain $(\mc, \bmu)$
under
the
$\TV{\cdot}$ distance,
a single trajectory of length
$m \learn(\mc) = \tilde{\bigO} \left( \frac{1}{\pimin} \max \set{ \frac{d}{\eps^2}, \tmix } \right)$ is sufficient.
It follows that one can test identity with sample complexity
\begin{equation*}
\begin{split}
m = \max \set{ m \learn(\mc), m \learn(\refmc) }
.
\end{split}
\end{equation*}
This na\"ive bound,
aside
from
being
much looser than bounds 
provided in the present paper,
has the additional drawback of 
depending on the unknown $\mc$ and, in particular, being completely uninformative
when the latter is not ergodic.

\paragraph{Reduction to iid testing.}
Our upper bound in Theorem~\ref{theorem:test-ub}
is achieved via the stratagem of invoking an existing iid distribution identity tester as a black box (this is also the general approach of \citet{daskalakis2017testing}). Intuitively, given the reference chain $\refmc$, we can compute its stationary distribution $\bpi$ and thus know roughly how many visits to expect in each state. Further, computing the mixing time $\tmix$ gives us confidence intervals about these expected visits (similar to \citet{wolfer2018minimax}, via the concentration bounds of \citet{paulin2015concentration}). Hence, if a chain fails to visit each state a ``reasonable'' number of times, our tester
in Algorithm~\ref{algorithm:test} rejects it. Otherwise, given that state $i$ has been visited
as expected,
we can apply an iid identity tester to its conditional distribution. The unknown Markov chain passes the identity test if every state's conditional distribution passes its corresponding iid test.

A central technical challenge in executing this stratagem is the fact that
conditioning on the number of visits introduces dependencies on the sample,
thereby breaking the Markov property.
To get around this difficulty, we use a similar scheme as in \citet{daskalakis2017testing}.

Finally,
invoking the tester of
\citet{valiant2017automatic}
as a black box,
it is straightforward to sharpen
the worst-case bound in Theorem~\ref{theorem:test-ub}
to the instance-specific one in Theorem~\ref{theorem:test-ub-instance-optimal}.

\subsection{Lower bounds}
A lower bound of $\Omega(d/\eps^2)$ is immediate via a reduction from the testing problem of
\citet{daskalakis2017testing}
to ours (see Remark~\ref{remark:reduction-lower-bound}).
Although our construction for obtaining the sharper lower bound of
$\Omega(\sqrt{d} / (  \pimin \eps^2))$
shares some conceptual features with the constructions in
\citet{hao2018learning, wolfer2018minimax},
a
considerably more delicate
analysis
is required here.
Indeed, the
technique of
tensorizing the KL divergence,
instrumental in the lower bound of \citeauthor{wolfer2018minimax},
would yield (at best) a sub-optimal estimate of
$\Omega(1/(\pimin \eps^2))$
in our setting.
Intuitively, bounding TV via KL divergence is too crude for our purposes.
Instead,
we
take the approach of reducing the problem, via a
covering argument, to
one of
iid testing,
and construct a family of Markov chains
whose structure allows us to
recover the Markov property
even after conditioning on the number of visits to a
certain ``special''
state.
The main contribution for this argument is the decoupling technique of Lemma~\ref{lemma:tv-lb-make-product}.
The second lower bound is based on the construction of \citeauthor{wolfer2018minimax},
for which the mixing time and accuracy of the test can both be controlled independently.
Curiously,
the aforementioned argument cannot be invoked verbatim
for our problem, and so we introduce here the twist of considering half-covers
of the chains (Lemma~\ref{lemma:half-cover}),
concluding the argument with a two-point technique.
This adaptation
shaves
a
logarithmic factor
off
the corresponding learning problem.

\section{Proofs}

\subsection{Proof of Theorem~\ref{theorem:test-ub}}
In order to prove Theorem~\ref{theorem:test-ub}, we design a testing procedure, describe its algorithm, and further proceed with its analysis.

\begin{algorithm}[ht]
 \SetKwData{Visits}{Visits}
 \SetKwData{Transitions}{Transitions}
 \SetKwData{Reference}{$\refmc$}
 \SetKwFunction{DistIIDIdentityTester}{DistIIDIdentityTester}
 \KwIn{$d, \eps, \delta, \Reference, \bpi, (X_1, \dots, X_m)
   $
 }
 \KwOut{\textsc{Accept} $= 0$
or
\textsc{Reject} $= 1$
 }
 $\Visits\leftarrow
   \zero
   \in\R^d$
	
 \For{$t \leftarrow 1$ \KwTo $m - 1$}{
   $\Visits(X_t)\leftarrow\Visits(X_t) + 1$
 }
 \For{$i \leftarrow 1$ \KwTo $d$}{
		\If{ $\abs{\Visits(i) - (m-1) \bpi(i)} > (m-1) \bpi(i) / 2$}{
			  \KwRet \textsc{Reject}
		}
 }
 \For{$i \leftarrow 1$ \KwTo $d$}{
   $\Transitions\leftarrow
   \zero
   \in\R^d$
	
   \For{$t \leftarrow 1$ \KwTo $m-1$}{
     $\Transitions(j)\leftarrow\Transitions(j)+\pred{X_t=i} \pred{X_{t+1}=j}$
     }
		\If{\DistIIDIdentityTester{$\eps , d, \Reference(i,\cdot), \Transitions$}   is \textsc{Reject}}{
			  \KwRet \textsc{Reject}
		}
 }
 \KwRet \textsc{Accept}
 \caption{The testing procedure $\T$}
\label{algorithm:test}
\end{algorithm}

\subsubsection{The testing procedure}

For an infinite trajectory $X_1, X_2, X_3, \dots$ drawn from $\mc$, 
and for any $i \in [d]$, we denote the random hitting times to state $i$,
$$\tau^{(i)}_1 \eqdef \inf \set{ t \geq 1 : X_t = i},$$
and for $s > 1$,
$$\tau^{(i)}_s \eqdef \inf \set{ t > \tau^{(i)}_{s-1} : X_t = i}.$$
Fixing $\vn \in \N^d$
and $i \in [d]$, let us define,
following~\citet{daskalakis2017testing},
the mapping
\begin{equation*}
\begin{split}
\Psi_{\vn}^{(i)}: [d]^\infty &\to [d]^{\vn(i)}\\
(X_1, X_2, X_3, \dots ) &\mapsto X_{\tau^{(i)}_{1} + 1}, X_{\tau^{(i)}_{2} + 1}, \dots, X_{\tau^{(i)}_{\vn(i)} + 1},
\end{split}
\end{equation*}
which outputs, for a trajectory
drawn from
$\mc$, the $\vn(i)$ first states that have been observed immediately after hitting $i$. 
It is a consequence of the Markov property that the coordinates of $\Psi_{\vn}^{(i)}(X_1^{\infty})$
be independent and identically 
distributed according to the conditional distribution defined by the $i$th state of $\mc$. Namely,
$$\left(X_{\tau^{(i)}_{s} + 1}\right)_{s \in [\vn(i)]} \sim \mc(i, \cdot)^{\otimes \vn(i)}.$$

\noindent Remark: For an infinite trajectory, this mapping is \emph{well-defined} almost surely, provided the chain is \emph{irreducible}, 
while for a finite draw of length $m$, $\tau^{(i)}_s$ can be infinite for some $s$, such that 
proper definition of $\Psi_{\vn}^{(i)}$ is a random event that depends on $\vn, m$ and the mixing properties of the chain.

For $X_1^m \sim (\mc, \bmu)$,
we define our identity tester
$\T(\X)$
in terms of $d$
sub-testers
$\T^{(i)}(\X)$,  $i \in [d]$,
whose definition
we defer until further in the analysis.
Intuitively, each $\T^{(i)}$ requires at least a ``reasonable'' number of visits to $i$, 
i.e. a lower bound on $\vn(i)$.

\begin{equation*}
\begin{split}
& \T(X_1^m) \colon [d]^m \to \set{0, 1} \\
X_1^m &\mapsto 1 - \pred{
\begin{matrix}
   \forall i \in [d], \Psi_{\vn}^{(i)}(X_1^m) \text{ is well-defined } \\
   \text{ and } \\
	 \forall i \in [d], \T^{(i)}(\Psi_{\vn}^{(i)}(X_1^m)) = 0 \\
  \end{matrix}
} \\
\end{split}
\end{equation*}

\subsubsection{Analysis of the tester}

\paragraph{Completeness.}

Consider the two following events
\begin{equation*}
\begin{split}
\mathcal{E}_\infty &\eqdef \set{\forall i \in [d], \T^{(i)}(\Psi_{\vn}^{(i)}(X_1^\infty)) = 0  }, \\
\mathcal{E}_{\Psi_{\vn}} &\eqdef \set{\forall i \in [d], \Psi_{\vn}^{(i)}(X_1^m) \text{ is well-defined}}. \\
\end{split}
\end{equation*}
The probability that the tester correctly outputs $0$ for a trajectory 
sampled from the reference chain $\refmc$ is
\begin{equation*}
\begin{split}
\PN{\T(X_1^m) = 0} &\stackrel{(i)}{=} \PN{\forall i \in [d],\T^{(i)}( \Psi_{\vn}^{(i)}(X_1^m)) = 0 \text{ and } \mathcal{E}_{\Psi_{\vn}} } \\
&\stackrel{(ii)}{=} \PN{\forall i \in [d],\T^{(i)}( \Psi_{\vn}^{(i)}(X_1^\infty)) = 0 \text{ and } \mathcal{E}_{\Psi_{\vn}} } \\
&\stackrel{(iii)}{=} \PN{\mathcal{E}_\infty \cap \mathcal{E}_{\Psi_{\vn}}} \\
&\stackrel{(iv)}{\geq} \PN{\mathcal{E}_\infty} - \PN{\lnot \mathcal{E}_{\Psi_{\vn}}},\\
\end{split}
\end{equation*}
where $(i)$ is by definition of $\T(X_1^m)$, 
$(ii)$ stems from the fact that in the event where $\Psi_{\vn}^{(i)}(X_1^m)$ is well-defined,
$$\Psi_{\vn}^{(i)}(X_1^m) = \Psi_{\vn}^{(i)}(X_1^\infty)$$
holds, while $(iii)$ is by definition of $\mathcal{E}_\infty$, and
$(iv)$ is by the following covering argument
\begin{equation*}
\begin{split}
\mathcal{E}_\infty &\subset \left( \mathcal{E}_\infty \cap \mathcal{E}_{\Psi_{\vn}} \right) \cup \lnot \mathcal{E}_{\Psi_{\vn}}. \\
\end{split}
\end{equation*}
Further setting $\vn = \frac{1}{2} (m-1) \bpi$, where $\bpi$ is the stationary distribution of the
reference chain, and from an application of the union bound,
\begin{equation*}
\begin{split}
\PN{\lnot \mathcal{E}_{\Psi_{\vn}}} &\leq \sum_{i \in [d]} \PN{\Psi_{\vn}^{(i)}(X_1^m) \text{ is not well-defined}} \\
&\leq \sum_{i \in [d]} \PN{ N_i < \vn(i) }, \\
&\leq \sum_{i \in [d]} \PN{ \abs{N_i - \E[\bpi]{N_i}} >  (m-1) \bpi(i) / 2}, \\
\end{split}
\end{equation*}
where $N_i \eqdef \sum_{t = 1}^{m-1} \pred{X_t = i}$ is the number of visits to state $i$
(not counting the final state at time $m$).
For $m \geq c \frac{\tmix}{\pimin} \ln \left( \frac{d}{\delta \pimin} \right), c \in \R_+$,
\begin{equation}
\label{eq:completeness-hits}
\begin{split}
\PN{\lnot \mathcal{E}_{\Psi_{\vn}}} \leq \frac{\delta}{3},
\end{split}
\end{equation}
using the Bernstein-type concentration inequalities of 
\citet{paulin2015concentration} as made explicit in \citet[Lemma~5]{wolfer2018minimax}.
Observe that no properties of the unknown chain were invoked in this deduction.

We are left with lower bounding $\PN{\mathcal{E}_\infty}$, the probability that all state-wise testers
correctly output $0$ in the idealized case where they have have access to enough samples.
We first recall
some standard results (see for example \citealp{DBLP:conf/innovations/Waggoner15}).
\begin{lemma}[iid $(\eps, 2/5)$-testing to identity]
\label{lemma:fixed-iid-tester}
Let $\eps > 0$ and $d \in \N$. There exists a universal constant $C_{\IID}$ and a tester $\T_{\fix}$, 
such that for any reference distribution $\refdist \in \Delta_d$, and any unknown distribution $\dist \in \Delta_d$, for a sample of
size $m_{\IID, 2/5} \geq C_{\IID} \frac{\sqrt{d}}{\eps^2}$
drawn iid
from $\dist$,
$\T_{\fix}$ can distinguish between the cases
$\dist \equiv \refdist$ and $\tv{\dist - \refdist} > \eps$
with probability $
3/5$.
\end{lemma}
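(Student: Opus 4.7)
The plan is to invoke existing distribution identity testers from the property testing literature, since this lemma is essentially a restatement of a well-known result rather than a new contribution. Testers achieving sample complexity $\bigO(\sqrt{d}/\eps^2)$ with constant success probability (typically $2/3$) for iid identity testing to an arbitrary reference distribution have been constructed in several works, including \citet{batu2001testing, valiant2017automatic, diakonikolas2016collision, DBLP:conf/innovations/Waggoner15}. Since $3/5 < 2/3$, any such tester directly yields the desired guarantee by taking $C\IID$ to be the constant appearing in the cited bound.

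For completeness, one could give a self-contained construction via the following standard route. First, apply the reduction to uniformity testing from \citet{goldreich2016uniform}, which maps an identity testing instance for an arbitrary $\refdist \in \Delta_d$ to a uniformity testing instance over $\Delta_{d'}$ with $d' = \bigO(d)$, at no loss in parameters up to constants. Second, run a collision-based (equivalently, bias-corrected chi-squared) statistic on the reduced sample,
\begin{equation*}
Z \eqdef \sum_{i \in [d']} \paren{ (N_i - m / d')^2 - N_i },
\end{equation*}
where $N_i$ is the number of times symbol $i$ appears in the sample. A direct computation shows $\E{Z} = 0$ when $\dist \equiv \refdist$ and $\E{Z} \gtrsim m^2 \eps^2 / d'$ under the soundness case. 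Bounding $\Var{Z}$ by a careful combinatorial calculation and applying Chebyshev's inequality then shows that the threshold test based on $Z$ distinguishes the two cases with probability at least $3/5$ once $m \ge C\IID \sqrt{d}/\eps^2$.

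The main obstacle in the self-contained route would be the variance bound for $Z$, which requires tracking the dominant fourth-moment contributions carefully; however, since the lemma is only used as a black box in our reduction, it suffices to cite the existing testers, and no new analysis is needed. The asymptotic optimality of this sample complexity (matching the $\Omega(\sqrt{d}/\eps^2)$ lower bound of \citet{paninski2008coincidence}) is also well known and is not required here.
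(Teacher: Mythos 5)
Your proposal is correct and matches the paper's treatment: the paper offers no proof of this lemma, simply recalling it as a standard result with a citation (e.g.\ to \citealp{DBLP:conf/innovations/Waggoner15}), exactly as you do by invoking existing $\bigO(\sqrt{d}/\eps^2)$ identity testers with success probability $2/3 \ge 3/5$. Your optional self-contained sketch via the uniformity reduction and a chi-squared/collision statistic is a reasonable bonus but is not needed, since the lemma is used purely as a black box.
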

\begin{lemma}[BPP amplification]
\label{lemma:fixed-bpp-tester}
Given any $(\eps, 2/5)$-identity tester $\T$
for the iid case
with sample complexity $m_{\IID, 2/5}$,
and any $0 < \delta < 2/5$, we can construct (via a majority vote)
an amplified tester $\T_{\bpp}$ such that for $m \geq 18 \ln \left( \frac{2}{\delta} \right) m_{\IID, 2/5}$, $\T_{\bpp}$ can distinguish the cases $\dist \equiv \refdist$ and $\tv{\dist - \refdist} > \eps$ with confidence $1 - \delta$.
\end{lemma}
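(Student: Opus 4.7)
}
The plan is the standard BPP amplification by repetition and majority vote. Given a total sample of $m \ge 18\ln(2/\delta) \, m_{\IID, 2/5}$ iid draws from the unknown distribution $\dist$, I would partition it into $k \eqdef \lceil 18 \ln(2/\delta)\rceil$ disjoint blocks $B_1,\dots,B_k$, each of size at least $m_{\IID, 2/5}$. On the $j$-th block, I would run an independent copy of the base tester $\T$ (with fresh internal randomness), producing an output $Z_j \in \set{0,1}$. The amplified tester $\T_{\bpp}$ would then output the majority vote: $\T_{\bpp} = 0$ iff $\sum_{j=1}^k (1-Z_j) \ge k/2$.

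Because the blocks are disjoint and drawn iid from the same distribution, and the testers use independent internal randomness, the variables $Z_1,\dots,Z_k$ are mutually independent. Moreover, under either hypothesis ($\dist \equiv \refdist$ or $\tv{\dist-\refdist}>\eps$), the guarantee on $\T$ implies that each $Z_j$ equals the correct label with probability at least $3/5$. Hence, letting $W_j \in \set{0,1}$ denote the indicator that the $j$-th tester is correct, the $W_j$ are iid Bernoulli with mean $p \ge 3/5$, and $\T_{\bpp}$ is correct whenever $\sum_{j=1}^k W_j > k/2$.

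The remaining step is a concentration argument. I would apply Hoeffding's inequality (or equivalently a multiplicative Chernoff bound) to $W_1,\dots,W_k$: since $\E{\sum_j W_j} \ge 3k/5$, the event $\set{\sum_j W_j \le k/2}$ corresponds to a deviation of at least $k/10$ below the mean, giving
\begin{equation*}
\PR{\T_{\bpp} \text{ incorrect}} \;\le\; \PR{\textstyle\sum_{j=1}^k W_j \le k/2} \;\le\; \exp\!\left( -2k \left(\tfrac{1}{10}\right)^2 \right).
\end{equation*}
Plugging in $k \ge 18\ln(2/\delta)$ and absorbing the constants (tightening the base Chernoff bound if needed, e.g.\ via the KL-form $\exp(-k\,D(1/2\|3/5))$) makes the right-hand side at most $\delta$, completing the proof.

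I do not anticipate any real obstacle here: the argument is entirely routine, and the only minor bookkeeping issue is verifying that the constant $18$ is compatible with the concentration inequality used. If the crude Hoeffding constant is insufficient, a sharper Chernoff bound keyed to the true base-success probability $3/5$ suffices, and otherwise one enlarges the constant without changing the $\ln(2/\delta)$ dependence.
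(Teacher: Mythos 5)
Your overall scheme (split the sample into disjoint blocks, run independent copies of the base tester, take a majority vote, apply a binomial tail bound) is exactly the standard amplification argument this lemma is meant to encapsulate --- the paper itself gives no proof and just cites it as a standard fact --- so the structure is fine. The genuine gap is in the final numerical step: with base success probability $3/5$, the constant $18$ in the statement cannot be verified by the bounds you invoke, and your hedge does not close it. Concretely, Hoeffding with deviation $1/10$ gives failure probability at most $\exp(-2k/100)=\exp(-k/50)$, so with $k=18\ln(2/\delta)$ you get $(\delta/2)^{18/50}$, which exceeds $\delta$ for every $\delta<2/5$ (indeed for all $\delta\lesssim 0.67$). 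The ``sharper'' KL form does not rescue this: $D(1/2\,\|\,3/5)=\tfrac12\ln\tfrac{25}{24}\approx 0.0204\approx 1/49$, essentially the same exponent as Hoeffding, because $3/5$ is so close to $1/2$; and since the Chernoff exponent is tight to first order, no black-box majority vote over blocks of size $m_{\IID,2/5}$ can do better than $k=\Theta(49\ln(1/\delta))$ repetitions for a base error of $2/5$. Your closing remark ``otherwise one enlarges the constant'' concedes the point but proves a different statement from the lemma, which fixes the constant at $18$.

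The resolution is worth stating explicitly rather than absorbed into hand-waving: the constant $18$ is precisely the Hoeffding constant for base error $1/3$ (exponent $2(1/2-1/3)^2 = 1/18$), so the lemma is consistent with a base tester of success probability $2/3$, which the cited iid identity testers provide at the same $\bigO(\sqrt{d}/\eps^2)$ sample complexity after adjusting $C_{\IID}$. So either (i) restate Lemma~\ref{lemma:fixed-iid-tester} with success probability $2/3$ and then run your argument verbatim, getting $\exp(-k/18)\le(\delta/2)^{\,18\ln(2/\delta)/\,(18\ln(2/\delta))}$ --- i.e.\ $\exp(-k/18)\le\delta/2\le\delta$ for $k\ge 18\ln(2/\delta)$; or (ii) keep success probability $3/5$ and prove the lemma with a constant of order $50$, which is harmless for Theorem~\ref{theorem:test-ub} since all constants are absorbed into $\Cmc$. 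Separately, a small bookkeeping point: taking $k=\lceil 18\ln(2/\delta)\rceil$ blocks of size $m_{\IID,2/5}$ may require slightly more than $18\ln(2/\delta)\,m_{\IID,2/5}$ samples; use $k=\lfloor 18\ln(2/\delta)\rfloor$ (or fold the rounding into the constant) so the stated sample size suffices.
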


Since for $i \in [d]$, $\Psi_{\vn}^{(i)}(X_1^\infty) \sim \mc(i, \cdot)^{\otimes \vn(i)}$,
and from a union bound,
we can invoke Lemma~\ref{lemma:fixed-iid-tester} 
and Lemma~\ref{lemma:fixed-bpp-tester}. 
Ensuring that 
$\forall i \in [d], \vn(i) \geq c \frac{\sqrt{d}}{\eps^2} \ln \left( \frac{1}{\delta} \right), c \in \R_+$,
promises that
\begin{equation}
\label{eq:completeness-error-iid}
\begin{split}
\PN{\mathcal{E}_\infty} \geq 1 - \delta/3.
\end{split}
\end{equation}

\paragraph{Soundness.}

In this case, recall that $X_1^m \sim \mc$ such that
$\exists i_0 \in [d]$
$$\tv{\refmc(i_0, \cdot) - \mc(i_0, \cdot)} > 2\eps.$$
Following similar arguments as in the completeness case, we upper bound the error probability of the tester
\begin{equation*}
\begin{split}
\PA{\T(X_1^m) = 0} &= \PA{\forall i \in [d],\T^{(i)}( \Psi_{\vn}^{(i)}(X_1^m)) = 0 \text{ and } \mathcal{E}_{\Psi_{\vn}}  } \\
&= \PA{\forall i \in [d],\T^{(i)}( \Psi_{\vn}^{(i)}(X_1^\infty)) = 0 \text{ and } \mathcal{E}_{\Psi_{\vn}}  } \\
&\leq \PA{\forall i \in [d],\T^{(i)}( \Psi_{\vn}^{(i)}(X_1^\infty)) = 0 } \\
&\leq \PA{\T^{(i_0)}( \Psi_{\vn}^{(i_0)}(X_1^\infty)) = 0 }. \\
\end{split}
\end{equation*}
From Lemma~\ref{lemma:fixed-iid-tester} 
and Lemma~\ref{lemma:fixed-bpp-tester}, for
$\vn(i_0) \geq c \frac{\sqrt{d}}{\eps^2} \ln \left( \frac{1}{\delta} \right)$,
it is the case
\begin{equation}
\label{eq:soundness-bound}
\begin{split}
\PA{\T^{(i_0)}( \Psi_{\vn}^{(i_0)}(X_1^\infty)) = 0 } \leq \delta/3.
\end{split}
\end{equation}
Finally, combining \eqref{eq:completeness-hits}, \eqref{eq:completeness-error-iid} and  \eqref{eq:soundness-bound} 
finishes proving the theorem.
\hfill$\square$

\subsection{Proof of Theorem~\ref{theorem:test-ub-instance-optimal}}
This claim follows immediately from the analysis of
the iid instance-optimal tester
\citep{valiant2017automatic},
which
is invoked to test
the conditional distributions
of each state.

\begin{lemma}[\citealt{valiant2017automatic}]
\label{lemma:fixed-iid-tester-instance-optimal}
Let $\eps > 0$ and $d \in \N$. There exists a universal constant $C_{\IO}$, such that for any reference distribution $\refdist \in \Delta_d$, there exists a tester $\T_{\fix, \IO}$ such that for any unknown distribution $\dist \in \Delta_d$, for a sample of
size $m_{\IO, 2/5} \geq C_{\IO} \frac{\nrm{\refdist}_{2/3}}{\eps^2}$
drawn iid from $\dist$, $\T_{\fix, \IO}$ can distinguish between the cases
$\dist \equiv \refdist$ and $\tv{\dist - \refdist} > \eps$
with probability $
3/5$.
\end{lemma}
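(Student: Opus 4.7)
The plan is to follow the Poissonized weighted chi-squared framework, with weights tailored to the $(2/3)$-pseudo-norm of $\refdist$. First, Poissonize the sample size by drawing $m' \sim \Poi(m)$ samples (which changes constants by at most a factor of two), so that the bin counts $N_1,\dots,N_d$ are independent with $N_i \sim \Poi(m\dist(i))$. Consider the bias-corrected weighted chi-squared statistic
\[
Z \eqdef \sum_{i\in[d]} \frac{(N_i - m\refdist(i))^2 - N_i}{w_i},
\]
for weights $w_i > 0$ to be chosen. A direct Poisson moment computation gives $\E[0]{Z} = 0$ and $\Var[0]{Z} = 2 m^2 \sum_i \refdist(i)^2 / w_i^2$ under $H_0\colon \dist \equiv \refdist$; and $\E[1]{Z} = m^2 \sum_i (\dist(i)-\refdist(i))^2 / w_i$ under $H_1\colon \tv{\dist-\refdist} > \eps$.

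Second, optimize the weights. Cauchy-Schwarz applied to $\sum_i |\dist(i) - \refdist(i)| \ge 2\eps$ gives $\E[1]{Z} \ge 4 m^2 \eps^2 / \sum_i w_i$. Minimizing the product $\paren{\sum_i w_i}^2 \sum_i \refdist(i)^2 / w_i^2$ by a Lagrange-multiplier argument yields the critical choice $w_i \propto \refdist(i)^{2/3}$, for which both $\sum_i w_i$ and $\sum_i \refdist(i)^2 / w_i^2$ reduce to $\sum_i \refdist(i)^{2/3} = \nrm{\refdist}_{2/3}^{2/3}$. Plugging in,
\[
\frac{\E[1]{Z}^2}{\Var[0]{Z}} \ge \frac{8\, m^2 \eps^4}{\nrm{\refdist}_{2/3}^2},
\]
so that for $m \ge C_{\IO} \nrm{\refdist}_{2/3} / \eps^2$ with $C_{\IO}$ large enough, Chebyshev's inequality at the threshold $\E[1]{Z}/2$ separates $H_0$ from $H_1$ with probability at least $3/5$ in the null.

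The principal obstacle is bounding $\Var[1]{Z}$ in the soundness case, where the same Poisson moment calculation produces cross terms of order $\sum_i m^3 \dist(i) (\dist(i)-\refdist(i))^2 / w_i^2$ that an adversarial $\dist$ could inflate by placing mass on elements with small $w_i$. The standard remedy is a preliminary \emph{flattening} step: identify the heavy elements $B \eqdef \set{i \in [d] : \refdist(i) > \tau}$ for a threshold $\tau \asymp \nrm{\refdist}_{2/3}/m$, estimate each $\dist(i)$ for $i \in B$ directly from the empirical frequencies (adequate because $\abs{B} \le 1/\tau$ and the per-element noise scales as $\sqrt{\dist(i)/m}$, so that the total $\ell_1$ error on $B$ is $\bigO(\eps)$ within the sample budget), and apply the chi-squared statistic only on the light tail $[d]\setminus B$. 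On the light tail, $\refdist(i) \le \tau$ forces $w_i \le \tau^{2/3}$ and keeps the $H_1$ variance within a constant factor of $\Var[0]{Z}$, so Chebyshev still applies. Union-bounding over the failure of the heavy-element direct test and the light-tail chi-squared test yields the claimed constant-probability tester $\T_{\fix,\IO}$ at sample complexity $m_{\IO, 2/5} = \bigO(\nrm{\refdist}_{2/3}/\eps^2)$.
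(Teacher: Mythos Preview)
The paper does not prove this lemma: it is stated as a citation to \citet{valiant2017automatic} and used purely as a black box in the proof of Theorem~\ref{theorem:test-ub-instance-optimal}. So there is no ``paper's own proof'' to compare against; you have supplied a proof sketch where the authors simply invoke a known result.

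Your sketch correctly identifies the mechanism by which the $(2/3)$-pseudo-norm arises --- the Lagrange optimization of weights in a Poissonized chi-squared statistic indeed singles out $w_i \propto \refdist(i)^{2/3}$, and the null-case calculation is right. However, the soundness part has genuine gaps. First, the threshold ``$\E[1]{Z}/2$'' depends on the unknown $\dist$ and cannot be used by the tester; you need a computable threshold such as a constant multiple of the Cauchy--Schwarz lower bound $4m^2\eps^2/\sum_i w_i$. Second, and more seriously, your flattening fix does not do what you claim. Capping $\refdist(i)\le\tau$ on the light tail bounds $w_i$ from \emph{above}, but the $H_1$ variance term $\sum_i m^3 \dist(i)(\dist(i)-\refdist(i))^2/w_i^2$ blows up when $w_i$ is \emph{small}, i.e.\ when an adversarial $\dist$ places mass where $\refdist$ is tiny; the light tail is precisely where this happens, so the assertion that $\Var[1]{Z}$ stays within a constant of $\Var[0]{Z}$ is false. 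The actual remedies in the literature are different: either (i) discard elements with $\refdist(i)$ below $c\eps/d$ (their total $\refdist$-mass is $\le c\eps$, so the $\ell_1$ separation survives, and on the remainder $1/w_i$ is bounded), then show $\Var[1]{Z}\lesssim \Var[0]{Z}+c'\,\E[1]{Z}^{3/2}\cdot(\text{bounded factor})$ so that Chebyshev still fires; or (ii) follow \citeauthor{valiant2017automatic}'s original argument, which uses a thresholded statistic of the form $\sum_i \max\{|N_i-m\refdist(i)|-c\sqrt{m\refdist(i)},0\}$ rather than chi-squared and requires a substantially more intricate analysis. Either way, the step you flag as the ``principal obstacle'' is not actually resolved by the paragraph you wrote.
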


We simply have to ensure that for each state $i$, $\vn(i) \geq C_{\IO} \frac{\nrm{\refmc{i, \cdot}}_{2/3}}{\eps^2} $,
i.e. $\frac{1}{2} (m-1) \bpi(i) \geq C_{\IO} \frac{\nrm{\refmc{i, \cdot}}_{2/3}}{\eps^2}$, whence the theorem.

\hfill$\square$

\subsection{Proof of Theorem~\ref{theorem:test-lb}, \texorpdfstring{lower bound in $\Omega \left( \frac{\sqrt{d}}{\eps^2 \pimin} \right)$}{proximity lower bound}.}
The metric domination result in
Lemma~\ref{lemma:comparison-daskalakis}
immediately implies
a lower bound
of $\Omega \left( {d}/{\eps^2} \right)$ (see Remark~\ref{remark:reduction-lower-bound}).
We now construct two independent and more delicate lower bounds
of $\Omega \left( \frac{\sqrt{d}}{\eps^2 \pimin} \right)$ and
$\Omega \left(d \tmix \right)$.
Let $\M_{d,\tmix,\pimin}$
be the collection of all $d$-state Markov chains whose stationary distribution is minorized by
$\pimin$ and whose mixing time is at most $\tmix$.
Our goal is to lower bound
the minimax risk,
defined by
\begin{equation}
\label{eq:minimax-risk}
\begin{split}
\mathcal{R}_m \eqdef \inf_{\T} \sup_{\refmc, \mc} \left[ \PN{\T = 1} + \PA{\T = 0}\right],
\end{split}
\end{equation}
where
the $\inf$ is
over all testing procedures
$\T: (X_1, \dots, X_m) \mapsto \set{0,1}$,
and the $\sup$ is
over all $\refmc, \mc \in
\M_{d,\tmix,\pimin}
$ such that
$ \TV{\refmc - \mc} > \eps$.

The analysis is simplified by considering
$(d+1)$-state Markov chains with $d$ even;
an obvious modification of the proof
handles the case of odd $d$.
Fix $0 < p_\star \leq 1/(2(d+1))$ and $0 < \eps < 1/2$, and define
$\boldsymbol{p} \in\Delta_{d+1}$ by 
\begin{equation}
\label{eq:definition-initial-distribution-for-G}
\boldsymbol{p}(d+1) = p_\star \text{ and } \boldsymbol{p}(i)=(1- p_\star)/d, \qquad i\in[d]. 
\end{equation}
Define the collection of $(d+1)$-state Markov chain transitions matrices,
\begin{equation*}
\label{eq:Gpi-def}
\begin{split}
\G_{p_\star} &\eqdef \set{\mc_{\etab} : \etab  \in \Delta_d}, \\
\mc_{\etab}
&\eqdef \begin{pmatrix}
 \boldsymbol{p}(1) & \hdots & \boldsymbol{p}(d) & p_\star \\
  \vdots & \vdots & \vdots & \vdots\\
  \boldsymbol{p}(1) & \hdots & \boldsymbol{p}(d) & p_\star \\
 \etab(1) & \hdots & \etab(d) & 0 \\
\end{pmatrix}. 
\end{split}
\end{equation*}
The stationary distribution $\bpi$ of a chain of this family is given by
\begin{equation*}
\bpi(i) = \frac{\boldsymbol{p}(i) + p_\star \etab(i)}{1 + p_\star}, i \neq d + 1, \qquad \bpi(d+1) = \frac{p_\star}{1 + p_\star},
\end{equation*}
and for $p_\star < \frac{d \etab(i)}{d+1}, \forall i \in [d]$,
we have
$\pimin = \bpi(d+1)$.
We define the $n$th hitting time for state $i$ as the random variable $\tmultj{i}{n}\eqdef  \inf\set{t\ge1: \sum_{s=1}^t \pred{X_s=i}=n}$; in words, this is the first time $t$ at which state $i$ has been visited $n$ times. Suppose that $\X=(X_1,\ldots, X_m)\sim(\mc,\boldsymbol{p})$ for some $\mc\in\G_{p_\star}$.
For any $n\in\N$, the $n$th hitting time $\tmultj{d+1}{n}$ to state $d+1$,
stochastically dominates\footnote{
  A random variable $A$ {\em stochastically dominates} $B$ if $\PR{A\ge x}\ge\PR{B\ge x}$
  for all $x\in\R$.
} the random variable $\sum_{s=1}^n R_s$,
where each $R_s$ is an independent copy distributed as $\Geometric(p_\star)$.
To see this, consider a similar chain where the value $0$ in the last row
is replaced with $p_\star$ with $\etab$ appropriately re-normalized;
clearly, the modification
can only make it easier to reach state $d+1$.
Continuing, we compute
$\E{\sum_{s=1}^n R_s} = {n}/{p_\star}$
and
$\Var{\sum_{s=1}^n R_s} = {n(1-p_\star)}/{p_\star^2} \leq {n}/{p_\star^2}$.
The Paley-Zygmund inequality implies that for $m < {n}/({2 p_\star})$,
\begin{equation}
\label{eq:paley-zigmund-lb-proximity}
\begin{split}
  \PR{\tmultj{d+1}{n} > m} &\geq \PR{\sum_{s=1}^n R_s > m} \\
	&\geq \PR{\sum_{s=1}^n R_s >
    \frac12 \E{\sum_{s=1}^n R_s}} \\ 
  &\geq \paren{1 + \cfrac{4\Var{\sum_{s=1}^n R_s}}{
      \E{\sum_{s=1}^n R_s}^2}}\inv \\
  &\geq 1 - \frac{1}{1 + n/4} \geq \frac{1}{5}
.
\end{split}
\end{equation}
Define the random variable $N_{d+1}=\sum_{t=1}^{m}\pred{X_t=d+1}$, i.e. the number of visits to state $d+1$,
and consider
a reference Markov chain $\refmc \eqdef \mc_{\refeta}\in\G_{p_\star}$,
where $\refeta= \U_d$ is the $[d]$-supported uniform distribution.
Restricting the problem to a subset of the family $\G_{p_\star}$
satisfying the
$\eps$-separation condition
only makes it easier for the tester,
as does taking any mixture $\mc_\Sigma$ of chains of this class
in lieu of the $\sup$ in \eqref{eq:minimax-risk}.
More specifically,
we choose
\begin{equation*}
\mc_{\Sigma} \eqdef \frac{1}{2^{d/2}} \sum_{\bsigma \in \set{-1, 1}^{d/2}} \mc_{\bsigma},
\end{equation*}
where $\mc_{\bsigma}(i, \cdot) = \boldsymbol{p}$ for $i \in [d]$, and 
\begin{equation*}
\begin{split}
\mc_{\bsigma}(d+1, \cdot) = \dist_{\bsigma} = \left( \frac{1 + \sigma_1 \eps}{d}, \frac{1 - \sigma_1 \eps}{d}, \dots, \frac{1 + \sigma_{d/2} \eps}{d}, \frac{1 - \sigma_{d/2} \eps}{d} \right).
\end{split}
\end{equation*}
By construction,
for all $\bsigma \in \set{-1, 1}^{d/2}$,
we have
$\TV{\mc_{\bsigma} - \refmc} = \nrm{\dist_{\bsigma} - \U_d}_1 = \eps$.
We start both chains with distribution $\boldsymbol{p}$,
defined in \eqref{eq:definition-initial-distribution-for-G}.
Now \eqref{eq:paley-zigmund-lb-proximity} implies that
for any $n \in \N$ and $m < n/({2 p_\star})$, any testing procedure
$\T: (X_1, \dots, X_m) \mapsto \set{0, 1}$ verifies
\begin{equation*}
\begin{split}
  \PR[\refmc]{\T = 1} + \PR[\mc_\Sigma]{\T = 0} \geq  \frac{1}{5} \bigg( \PR[\refmc]{\T = 1 \gn \mathcal{E}_n} + \PR[\mc_\Sigma]{\T = 0 \gn \mathcal{E}_n} \bigg),
\end{split}
\end{equation*}
where we wrote $\mathcal{E}_n \eqdef \set{ N_{d+1} \leq n }$.
It follows from
\citet[Chapter~16, Section~4]{le2012asymptotic} that
\begin{equation*}
\begin{split}
&\mathcal{R}_m \geq \frac{1}{5} \left( 1 - \tv{\PR[\mc_{\Sigma}]{ \X \gn \mathcal{E}_n} - \PR[\refmc]{ \X \gn \mathcal{E}_n}} \right),
\end{split}
\end{equation*}
and so it remains to upper bound a total variation distance.
For any $\mc,\mc'\in\G_{p_\star}$, the statistics
of the induced state sequence only differ in the visits to state $d+1$.

At this point,
we would like to
invoke an iid testing lower bound
--- but are cautioned against doing so naively, as conditioning on the number of visits to a state
breaks the Markov property.
Instead,
in Lemmas~\ref{lemma:factored-chain-visits}, \ref{lemma:conditional-upper-control-by-max} and \ref{lemma:tv-lb-make-product}
we develop a decoupling technique,
which yields
\begin{equation*}
\begin{split}
\tv{\PR[\mc_{\Sigma}]{ \X \gn \mathcal{E}_n} - \PR[\refmc]{ \X \gn \mathcal{E}_n}} \leq \tv{\dist_\Sigma^{\otimes n} - \U_d^{\otimes n}}.
\end{split}
\end{equation*}
We shall make use of
\citet[Theorem~4]{paninski2008coincidence}, which states:
\begin{equation*}
\tv{\dist_\Sigma^{\otimes n} - \U_d^{\otimes n}} \leq \sqrt{\expo{\frac{n^2 \eps^4}{d}}}.
\end{equation*}
It follows that
\begin{equation*}
\mathcal{R}_m \geq \frac{1}{5} \left( 1 - \frac{1}{2} \sqrt{\expo{\frac{n^2 \eps^4}{d}} - 1} \right). \\
\end{equation*}
Finally, for the
mixture of chains and parameter regime in question,
we have
$\bpi(d+1) = \pimin$ and $p_\star/2 \leq \pimin \leq p_\star$,
so that for $\delta < 1/10, m < \frac{n}{2 p_\star}$ and $n \leq \frac{\sqrt{d}}{\eps^2} \sqrt{\ln \left( \frac{4}{C^2} (1 - 5 \delta)^2 \right)}$,
it follows that $\mathcal{R}_m \geq \delta$.
This implies
a lower bound of
$m = \Omega \left( \frac{\sqrt{d}}{\eps^2 \pimin} \right)$
for the testing problem.

\hfill$\square$

\subsection{Proof of Theorem~\ref{theorem:test-lb}, \texorpdfstring{lower bound in $\Omega \left(d \tmix \right)$}{mixing lower bound}.}
Let us recall
the construction of \citet{wolfer2018minimax}. Taking $0 < \eps \le 1/8$ and $d=6k$, $k\ge2$ fixed,  $0 < \eta < 1/48$ and $\btau\in\set{0,1}^{d/3}$, we define the block matrix
\beq
\mc_{\eta,\btau}
  = 
\begin{pmatrix}
C_\eta & R_{\btau} \\
R_{\btau}\trn & L_{\btau}
\end{pmatrix}
,
\eeq
where
$C_\eta\in\R^{d/3\times d/3}$,
$  L_{\btau}  \in\R^{2d/3\times2d/3}$,
and
$R_{\btau} \in\R^{d/3\times 2d/3}$
are given by
$$
  L_{\btau} = \frac{1}{8} \diag\left( 7 - 4 \tau_1 \eps, 7 + 4 \tau_1 \eps, \dots, 7 - 4 \tau_{d/3} \eps, 7 + 4 \tau_{d/3} \eps  \right)
,$$
\beq
C_\eta
  =
\begin{pmatrix}
\frac{3}{4} - \eta & \frac{\eta}{d/3 - 1} & \hdots & \frac{\eta}{d/3 - 1} \\
\frac{\eta}{d/3 - 1} & \frac{3}{4} - \eta & \ddots & \vdots \\
\vdots & \ddots & \ddots & \frac{\eta}{d/3 - 1} \\
\frac{\eta}{d/3 - 1} & \hdots & \frac{\eta}{d/3 - 1} & \frac{3}{4} - \eta \\
\end{pmatrix}
,
\eeq
\beq
R_{\btau} = \frac{1}{8}
\begin{pmatrix}
1 + 4 \tau_1 \eps & 1 - 4 \tau_1 \eps  & 0 & \hdots & \hdots & \hdots & 0 \\
0 & 0 & 1 + 4 \tau_2 \eps & 1- 4 \tau_2 \eps  & 0 & \hdots & 0 \\
\vdots & \vdots & \vdots & \vdots & \vdots & \vdots & \vdots \\
0 & \hdots & \hdots & \hdots & 0 & 1 + 4 \tau_{d/3} \eps & 1- 4 \tau_{d/3} \eps \\
\end{pmatrix}
.
\eeq
Holding $\eta$ fixed, define the collection
\begin{equation}
\label{eq:Heta}
\H_\eta = \set{\mc_{\eta,\btau}: \btau \in \set{0,1}^{d/3} }
\end{equation}
of ergodic and symmetric stochastic matrices. Suppose that $\X=(X_1,\ldots, X_m) \sim(\mc,\bmu)$,
where $\mc \in\H_{\eta}$, and $\bmu$ is the uniform distribution over the inner clique
nodes,
indexed by $\set{1, \dots d/3}$.
Define
the random variable $T\hcliq$ to be the first time some half of the states in the inner clique were visited,
\beqn
\label{eq:Thcliq}
T\hcliq =
\inf\set{t\ge1:
  \abs{\set{X_1,\ldots,X_t}\cap[d/3]}
  =d/6}.
\eeqn
Lemma~\ref{lemma:half-cover} lower bounds the half cover time:
\begin{equation}
\label{eq:half-cover}
\begin{split}
m \leq \frac{d}{120 \eta} &\implies \PR{\thalf > m} \geq \frac{1}{5},
\end{split}
\end{equation}
while \citet[Lemma~6]{wolfer2018minimax} 
establishes the key property that any element $\mc$ of $\H_\eta$ satisfies
\beqn
\label{eq:control-spectral-gap}
\tmix(\mc)=\tilde\Theta(1/\eta).
\eeqn
Let us
fix some $i_\star \in [d]$, choose as reference $\refmc \eqdef \mc_{\eta,\zero}$ and as an alternative hypothesis $\mc \eqdef \mc_{\eta, \btau}$, with $\tau_i = \pred{i = i_\star}$. Take both chains to have the uniform distribution $\bmu$ over the clique nodes as their initial one. It is easily verified that $\TV{\refmc - \mc} = \eps$, so that
\begin{equation}
\mathcal{R}_m \geq \inf_{\T} \left[ \PN{\T = 1 | \thalf > m} \PN{\thalf > m} +  \PA{\T = 0 | \thalf > m}\PA{\thalf > m} \right].
\end{equation}
Further,
for $m < \frac{d}{120 \eta}$, we have
\begin{equation}
\mathcal{R}_m \geq \frac{1}{5} \inf_{\T} \left[ \PN{\T = 1 | \thalf > m} +  \PA{\T = 0 | \thalf > m} \right].
\end{equation}
Since
$\PR{X | Y} \geq \PR{X | Y, Z}\PR{Z | Y}$,
we have
\begin{equation}
\PN{\T = 1 | \thalf > m} \geq \PN{\T = 1 | \thalf > m, N_{i_\star} = 0} \PN{N_{i_\star} = 0 | \thalf > m}.
\end{equation}
Additionally, the symmetry of
our reference chain
implies that
$\PN{N_{i_\star} = 0 | \thalf > m} \geq 1/2$.
It follows,
via an analogous argument that $\PA{N_{i_\star} = 0 | \thalf > m} \geq 1/2$,
so that
\begin{equation}
\mathcal{R}_m \geq \frac{1}{10} \inf_{\T} \left[ \PN{\T = 1 | \thalf > m, N_{i_\star} = 0} +  \PA{\T = 0 | \thalf > m, N_{i_\star} = 0} \right].
\end{equation}
By Le Cam's theorem \citep[Chapter~16, Section~4]{le2012asymptotic},
\begin{equation}
  \label{eq:lecam}
\mathcal{R}_m \geq \frac{1}{10} \left[ 1 -  \tv{ \PN{\X | \thalf > m, N_{i_\star} = 0} - \PA{\X | \thalf > m, N_{i_\star} = 0} } \right].
\end{equation}
Other than 
state $i_\star$ and its connected outer nodes,
the reference chain
$\mc_{\eta,\zero}$ and the alternative chain $\mc_{\eta, \btau}$
are identical.
Conditional on $N_{i_\star} = 0$,
the outer states connected to $i_\star$ were never visited,
since
these are only connected to the rest of the chain via $i_\star$
and
our
choice of the initial distribution $\bmu$
constrains the
initial state
to the inner clique.
Thus,
the two distributions over sequences conditioned
on
$N_{i_\star} = 0$
are identical,
causing the
term
$\tv{ \PN{\cdot} - \PA{\cdot} }$
in
\eqref{eq:lecam} to vanish:
\begin{equation}
\mathcal{R}_m \geq \frac{1}{10},
\end{equation}
which proves a sample complexity lower bound of $\tilde \Omega(d \tmix)$.
Since our family of Markov chains has uniform stationary distribution
($\pimin=1/d$), this further proves
that the dependence on $\pimin$ in our bound
is in general not improvable.
\hfill$\square$

\section{Auxiliary lemmas}
The following standard combinatorial fact will be useful.
\begin{lemma}
\label{lemma:choose-non-consecutive}
Let $(m, n) \in \N^2$ such that $m + 1 \geq 2n$. Then there are ${ {m - n + 1} \choose n }$ ways of selecting $n$ non-consecutive integers from $[m]$.
\end{lemma}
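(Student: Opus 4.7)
The plan is to prove this by exhibiting an explicit bijection between the two combinatorial objects being counted. Let $\mathcal{A}$ denote the family of $n$-element subsets $S = \{a_1 < a_2 < \cdots < a_n\} \subseteq [m]$ with no two consecutive, i.e., $a_{i+1} - a_i \geq 2$ for every $i \in [n-1]$, and let $\mathcal{B}$ denote the family of all $n$-element subsets of $[m-n+1]$; clearly $\lvert \mathcal{B} \rvert = \binom{m-n+1}{n}$, so the task reduces to constructing a bijection $\mathcal{A} \leftrightarrow \mathcal{B}$.

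The bijection is the standard ``gap compression'' map: given $S = \{a_1 < \cdots < a_n\} \in \mathcal{A}$, define
\begin{equation*}
b_i \eqdef a_i - (i-1), \qquad i \in [n],
\end{equation*}
and set $\Phi(S) \eqdef \{b_1, \ldots, b_n\}$. I would first check that $\Phi(S) \in \mathcal{B}$: the inequality $b_{i+1} - b_i = (a_{i+1} - a_i) - 1 \geq 1$ shows the $b_i$ are strictly increasing, $b_1 = a_1 \geq 1$, and $b_n = a_n - (n-1) \leq m - (n-1) = m - n + 1$, so all $b_i$ lie in $[m-n+1]$. The inverse $\Psi: \mathcal{B} \to \mathcal{A}$ defined by $\Psi(\{b_1 < \cdots < b_n\}) = \{b_i + (i-1) : i \in [n]\}$ is verified analogously: consecutive differences inflate back from $\geq 1$ to $\geq 2$, and $b_n + (n-1) \leq m$, so the image lies in $[m]$ and consists of non-consecutive integers. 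A one-line computation shows $\Psi \circ \Phi$ and $\Phi \circ \Psi$ are the identity, giving $\lvert \mathcal{A} \rvert = \lvert \mathcal{B} \rvert = \binom{m-n+1}{n}$.

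There is no real obstacle here; the only subtlety is the role of the hypothesis $m + 1 \geq 2n$, which is equivalent to $m - n + 1 \geq n$ and merely ensures that $\binom{m-n+1}{n}$ is nonzero (equivalently, that $\mathcal{A}$ is nonempty). Without it, both sides of the claimed equality vanish, so the conclusion holds trivially, but stating the hypothesis avoids degenerate edge cases.
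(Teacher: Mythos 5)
Your proof is correct and complete: the gap-compression bijection $a_i \mapsto a_i - (i-1)$ is the canonical argument for this fact, and your verification of both directions and of the role of the hypothesis $m+1 \geq 2n$ is accurate. The paper itself states this lemma as a standard combinatorial fact and offers no proof, so there is nothing to compare against; your argument supplies exactly the standard justification the paper implicitly relies on.
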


\begin{lemma}
\label{lemma:factored-chain-visits}
For any $\mc \in \G_{p_\star}$ defined in \eqref{eq:Gpi-def}, started with initial distribution $\boldsymbol{p}$ defined in \eqref{eq:definition-initial-distribution-for-G},

$$ P(m, n, p_\star) \eqdef \PR[\mc, \boldsymbol{p}]{N_{d+1} = n} = \begin{cases} (1- p_\star)^n &\text{ if } n = 0 \\ p_\star^n(1-p_\star)^{m - 2n} \left[ {{m-n+1} \choose n} - {{m-n} \choose {n-1}} p_\star \right] &\text{ if } 1 \leq n \leq \frac{m+1}{2} \\ 0 &\text{ if } n > \frac{m+1}{2}\end{cases}.$$
\end{lemma}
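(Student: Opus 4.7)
The plan is to compute $P(m,n,p_\star)$ by direct enumeration over the possible locations of the $n$ visits to state $d+1$. The starting observation is that $\mc_\etab(d+1, d+1) = 0$, so the chain cannot remain at state $d+1$ for two consecutive steps. Thus, whenever $N_{d+1} = n$, the visit positions $t_1 < t_2 < \cdots < t_n$ must form a non-consecutive subset of $[m]$, which immediately yields the third case $n > (m+1)/2$ of the formula, as no such subset exists.

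For the nontrivial range $1 \leq n \leq (m+1)/2$, I would fix an admissible non-consecutive tuple $(t_1,\ldots,t_n)$ and compute the aggregate probability of all trajectories whose visits to $d+1$ occur exactly at these positions, by marginalizing the values of the remaining coordinates (which necessarily lie in $[d]$). The key reason the calculation is tractable is the special structure of $\mc_\etab$: transitions within $[d]$ take the common value $(1-p_\star)/d$, transitions from $[d]$ into $d+1$ carry weight $p_\star$, and transitions out of $d+1$ carry weights $\etab(j)$ that sum to $1$. Consequently, summing over the intermediate states telescopes the $\etab$-dependence away (explaining why $P(m,n,p_\star)$ is uniform across $\G_{p_\star}$), while leaving each within-$[d]$ transition contributing a factor $1-p_\star$ after summation, and each transition into $d+1$ contributing a factor $p_\star$; the initial distribution $\boldsymbol{p}$ fits into the same pattern.

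The subtle step is that whether the final time $m$ is itself a visit to $d+1$ matters, since a visit at time $m$ eliminates one trailing within-$[d]$ transition and saves one factor of $(1-p_\star)$. I would therefore split the non-consecutive tuples in $[m]$ according to whether $t_n = m$ or $t_n < m$. By Lemma~\ref{lemma:choose-non-consecutive}, the former correspond to non-consecutive $(n-1)$-tuples in $[m-2]$, yielding $\binom{m-n}{n-1}$ configurations each contributing $p_\star^n (1-p_\star)^{m-2n+1}$, while the latter correspond to non-consecutive $n$-tuples in $[m-1]$, yielding $\binom{m-n}{n}$ configurations each contributing $p_\star^n (1-p_\star)^{m-2n}$. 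Combining these via the Pascal identity $\binom{m-n}{n}+\binom{m-n}{n-1}=\binom{m-n+1}{n}$ delivers the claimed closed form, and the degenerate case $n=0$ reduces to $(1-p_\star)^m$ as a sanity check.

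The main obstacle I anticipate is not conceptual but combinatorial: the careful tracking of the initial-state contribution from $\boldsymbol{p}$ together with the two boundary effects (trajectories beginning at $d+1$ or ending at $d+1$), so that the exponents of $p_\star$ and $1-p_\star$ line up correctly and the configuration counts recombine cleanly, via Pascal's rule, into the stated expression.
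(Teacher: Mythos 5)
Your proof is correct and takes essentially the same route as the paper's: the paper first collapses $[d]$ into a single meta-state to obtain a two-state chain, which is just a packaged form of your direct marginalization over the intermediate states (the rows from $[d]$ coincide and the $\etab$-weights sum to one), and both arguments then split on whether time $m$ is itself a visit to $d+1$, count the non-consecutive visit configurations via Lemma~\ref{lemma:choose-non-consecutive}, and recombine with Pascal's rule into the stated closed form. Your sanity check $(1-p_\star)^m$ for $n=0$ is in fact the correct value (the exponent $n$ in the lemma's first case is a typo for $m$).
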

\begin{proof}
  For any $\mc \in \G_{p_\star}$, we construct the following associated two-state Markov chain, with initial distribution $(1 - \boldsymbol{p}_\star, \boldsymbol{p}_\star)$,
  where all states $i \in [d]$ are merged into a single state, which we call $\overline{d+1}$, while state $d+1$ is
  kept
distinct.
Observe that this two-state Markov chain is the same for all $\mc \in \G_{p_\star}$, regardless of $\etab$,
and that the probability distribution of the number of visits to state $d+1$,
when sampling from $\mc$, is the same as when sampling from this newly constructed chain.

\begin{center}
\begin{tikzpicture}[node distance=2cm,->,>=latex,auto,
  every edge/.append style={thick}]
  \node[state] (1) {$d+1$};
  \node[state] (2) [right of=1] {$\overline{d+1}$};  
  \path (1) edge[loop left]  node{$0$} (1)
            edge[bend left]  node{$1$}   (2)
        (2) edge[loop right] node{$1-p_\star$}  (2)
            edge[bend left] node{$p_\star$}     (1);
\end{tikzpicture}
\end{center}

Let $m \geq 1$. The case where $n = 0$ is trivial as it corresponds to $n$ failures to reach the state $d+1$,
and there is only one such path.
When $n > \frac{m+1}{2}$, there is no path of length $m$ that contains $n$ visits to state $d+1$,
as any visit to this state almost surely cannot be directly followed by another visit to this same state.
It remains to analyze the final case where $1 \leq n \leq \frac{m+1}{2}$.
Take $(x_1, \dots, x_m)$
to be
a sample path in which the state $d+1$ was visited $n$ times. We consider two sub-cases.

\paragraph{The last state
  in the sample path is $d+1$:}

In the case where $x_m = d+1$, note that also necessarily $x_{m-1} = \overline{d+1}$. The $n-1$ previous visits to state $d+1$ were followed by a probability 1 transition to state $\overline{d+1}$,
and the remaining transitions have value $1-p_\star$, so that
$\PR{\X = \x} = p_\star^{n-1} 1^{n-1} (1-p_\star)^{m - (n-1) - (n-1) - 1} p_\star = p_\star^{n} (1 - p_\star)^{m - 2n + 1}$.
Since the last two
states
in the sequence
are fixed and known,
$x_m = d + 1$, $x_{m-1} = \overline{d+1}$,
counting the number of
such
paths
amounts to
counting the number of subsets of
$m - 2$ of size $n-1$ such that no two elements are consecutive, i.e.
${{(m-2) - (n-1) + 1}\choose {n-1}} = {{m-n} \choose {n-1}}$ (Lemma~\ref{lemma:choose-non-consecutive}).

\paragraph{The last state
  in the sample path is $\overline{d+1}$:}
By reasoning similar to above, such paths have probability
$p_\star^n (1 - p_\star)^{m - 2n}$.
To count such paths, consider all possible subsets of $m$ of size $n$ such that no two elements are consecutive, and subtract the count of paths in the other case where the last state was $d+1$. There are then ${{m - n + 1} \choose {n}} - {{m -n } \choose {n - 1}} = {{m-n} \choose {n}}$ such paths. \\\\ 
It follows that 
\begin{equation}
\begin{split}
P(m, n, p_\star) &= {{m-n} \choose {n-1}} p_\star^{n} (1 - p_\star)^{m - 2n + 1} + {{m-n} \choose {n}} p_\star^{n} (1 - p_\star)^{m - 2n} \\
& = p_\star^n(1-p_\star)^{m - 2n} \left[ {{m-n+1} \choose n} - {{m-n} \choose {n-1}} p_\star \right].
\end{split}
\end{equation}
\end{proof}

\begin{lemma}
\label{lemma:conditional-upper-control-by-max}
Let $\mc_1, \mc_2 \in \G_{p_\star}$,
defined in
\eqref{eq:Gpi-def}, and start both chains with initial distribution $\boldsymbol{p}$ defined in \eqref{eq:definition-initial-distribution-for-G}.
For arbitrary $(m, n) \in \N^2$
such that $m + 1 \geq 2n$,
let $N_{d+1} = \sum_{t=1}^{m} \pred{X_t = d+1}$
be the number of visits to state $(d+1)$.
Then, for trajectories $\X = (X_1, \dots, X_m)$ sampled from either chain, we have
\begin{equation}
\begin{split}
& \tv{\PR[\mc_1]{\X \mid N_{d+1} \leq n} - \PR[\mc_2]{\X \mid N_{d+1} \leq n}} \\
 &\leq \tv{\PR[\mc_1]{\X \mid N_{d+1} = n} - \PR[\mc_2]{\X \mid N_{d+1} = n }}.
\end{split}
\end{equation}
\end{lemma}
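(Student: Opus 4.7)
The plan is to reduce the inequality to a pointwise comparison $\tv{P_k^{(1)} - P_k^{(2)}} \le \tv{P_n^{(1)} - P_n^{(2)}}$ for each $k \le n$, and then to establish that comparison by exposing an explicit product structure inside each conditional. Let $P_k^{(i)} \eqdef \PR[\mc_i]{\X \mid N_{d+1}=k}$. Lemma~\ref{lemma:factored-chain-visits} yields that $\PR[\mc_i]{N_{d+1}=k}$ depends only on $m$ and $p_\star$, so the weights $\alpha_k \eqdef \PR{N_{d+1}=k}/\PR{N_{d+1}\le n}$ are common to both chains and $P_{\le n}^{(i)} = \sum_{k=0}^{n} \alpha_k P_k^{(i)}$. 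Crucially, the supports $\{\x : N_{d+1}(\x) = k\}$ are disjoint across $k$, so the triangle inequality for total variation is tight,
\begin{equation*}
\tv{P_{\le n}^{(1)} - P_{\le n}^{(2)}} = \sum_{k=0}^{n} \alpha_k \tv{P_k^{(1)} - P_k^{(2)}}.
\end{equation*}
Hence the lemma reduces to the pointwise comparison above.

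The next step is to compute $P_k^{(i)}$ explicitly. For any $\x$ with $N_{d+1}(\x) = k$, write $V(\x) = \{t_1 < \cdots < t_k\} = \{t : x_t = d+1\}$ and $k'(\x) = |V(\x) \cap [1, m-1]| \in \{k-1, k\}$ for the number of transitions out of state $d+1$ observed within $\x$. Splitting the product $\boldsymbol{p}(x_1)\prod_{t=1}^{m-1} \mc_i(x_t, x_{t+1})$ according to whether each transition starts at $d+1$ or not, only the $k'(\x)$ transitions out of $d+1$ carry dependence on $i$; setting $\dist_i \eqdef \mc_i(d+1, \cdot)$, one finds
\begin{equation*}
\PR[\mc_i]{\X = \x} = p_\star^k \left(\frac{1-p_\star}{d}\right)^{m-k-k'(\x)} \prod_{j=1}^{k'(\x)} \dist_i(x_{t_j+1}).
\end{equation*}
Dividing by $P(m, k, p_\star)$ from Lemma~\ref{lemma:factored-chain-visits} and reading off marginals shows that under $P_k^{(i)}$ the pattern $V$ has a distribution $\mu_{m,k}$ not depending on $i$; conditional on $V = V_0$, the follow-up word $(x_{t_j+1})_{j=1}^{k'(V_0)}$ is i.i.d.\ $\dist_i$; and the remaining coordinates are i.i.d.\ uniform on $[d]$. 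Consequently
\begin{equation*}
\tv{P_k^{(1)} - P_k^{(2)}} = \sum_{V_0 : |V_0| = k} \mu_{m,k}(V_0) \, \tv{\dist_1^{\otimes k'(V_0)} - \dist_2^{\otimes k'(V_0)}},
\end{equation*}
which is a convex combination of $\tv{\dist_1^{\otimes k-1} - \dist_2^{\otimes k-1}}$ and $\tv{\dist_1^{\otimes k} - \dist_2^{\otimes k}}$, depending on whether $m \in V_0$.

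Data processing closes the argument. Dropping a coordinate of a product measure is a Markov channel, so $\ell \mapsto \tv{\dist_1^{\otimes \ell} - \dist_2^{\otimes \ell}}$ is nondecreasing; therefore for every $k \le n$,
\begin{equation*}
\tv{P_k^{(1)} - P_k^{(2)}} \le \tv{\dist_1^{\otimes k} - \dist_2^{\otimes k}} \le \tv{\dist_1^{\otimes n-1} - \dist_2^{\otimes n-1}} \le \tv{P_n^{(1)} - P_n^{(2)}},
\end{equation*}
the final inequality holding because $\tv{P_n^{(1)} - P_n^{(2)}}$ is itself a convex combination of two quantities each at least $\tv{\dist_1^{\otimes n-1} - \dist_2^{\otimes n-1}}$. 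Substituting back into the identity for $\tv{P_{\le n}^{(1)} - P_{\le n}^{(2)}}$ completes the proof. The one delicate point in this plan is the factorization step, where the boundary case $t_k = m$ (no observed transition out of the last visit to $d+1$) reduces $k'$ by one and is exactly what forces a two-term convex combination rather than a clean $\tv{\dist_1^{\otimes k} - \dist_2^{\otimes k}}$; the rest is routine combinatorics and monotonicity.
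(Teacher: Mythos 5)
Your proof is correct, and it follows the paper's first half exactly: decompose the conditional law given $N_{d+1}\le n$ as a mixture over $k\in\set{0,\dots,n}$ with mixture weights that are common to both chains (via Lemma~\ref{lemma:factored-chain-visits}), then compare term by term. Where you diverge is the final step. The paper simply bounds the mixture by the maximum over $k$ and then asserts, without further argument, that this maximum is dominated by the $k=n$ term; the monotonicity in $k$ is left implicit. You instead prove it: you expose the exact structure of $\PR[\mc_i]{\X \mid N_{d+1}=k}$ as a mixture over visit patterns $V_0$ (with pattern weights independent of $\etab_i$) of product laws whose only $i$-dependent part is $\etab_i^{\otimes k'(V_0)}$ with $k'(V_0)\in\set{k-1,k}$, and then invoke data-processing monotonicity of $\ell\mapsto\tv{\etab_1^{\otimes \ell}-\etab_2^{\otimes \ell}}$. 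This is essentially the computation the paper carries out separately (for $k=n$ only) in Lemma~\ref{lemma:tv-lb-make-product}, so your route makes the present lemma self-contained at the cost of duplicating that combinatorial factorization; it also yields the sharper observation that the outer triangle inequality is an equality, since the events $\set{N_{d+1}=k}$ are disjoint. One small slip: in your final displayed chain, the middle inequality $\tv{\etab_1^{\otimes k}-\etab_2^{\otimes k}}\le\tv{\etab_1^{\otimes n-1}-\etab_2^{\otimes n-1}}$ holds only for $k\le n-1$, so as written the chain is wrong for $k=n$; that case is of course trivial (the target inequality is then an identity), so the argument is easily patched, but you should state the case split explicitly.
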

\begin{proof}
Partitioning over all possible number of visits to $d+1$ for $\mc_1$,
\begin{equation}
\begin{split}
\PR[\mc_1]{\X \mid N_{d+1} \leq n} &= \sum_{k = 0}^{\infty} \PR[\mc_1]{\X \mid N_{d+1} \leq n, N_{d+1} = k} \PR[\mc_1]{N_{d+1} = k \mid N_{d+1} \leq n} \\
&= \sum_{k = 0}^{n} \PR[\mc_1]{\X \mid N_{d+1} = k} \PR[\mc_1]{N_{d+1} = k \mid N_{d+1} \leq n} \\
&= \sum_{k = 0}^{n} \PR[\mc_1]{\X \mid N_{d+1} = k} \frac{\overbrace{\PR[\mc_1]{N_{d+1} \leq n \mid N_{d+1} = k}}^{=1} \PR[\mc_1]{N_{d+1} = k}}{\PR[\mc_1]{N_{d+1} \leq n}}
\end{split}
\end{equation}
From Lemma~\ref{lemma:factored-chain-visits}, we have
\begin{equation}
\begin{split}
\PR[\mc_1]{N_{d+1} = k} = \PR[\mc_2]{N_{d+1} = k} = P(m, k, p_\star) \\
\PR[\mc_1]{N_{d+1} \leq n} = \PR[\mc_2]{N_{d+1} \leq n} = \sum_{s=0}^{n} P(m, s, p_\star),
\end{split}
\end{equation}
and subsequently,
\begin{equation}
\begin{split}
& \tv{\PR[\mc_1]{\X \mid N_{d+1} \leq n} - \PR[\mc_2]{\X \mid N_{d+1} \leq n}} \\
& = \tv{ \sum_{k = 0}^{n} \bigg(  \PR[\mc_1]{\X \mid N_{d+1} = k} - \PR[\mc_2]{\X \mid N_{d+1} = k} \bigg) \frac{P(m, k, p_\star)}{\sum_{s=0}^{n} P(m, s, p_\star)} } \\
& \leq \sum_{k = 0}^{n} \tv{\PR[\mc_1]{\X \mid N_{d+1} = k} - \PR[\mc_2]{\X \mid N_{d+1} = k}} \frac{P(m, k, p_\star)}{\sum_{s=0}^{n} P(m, s, p_\star)} \\
& \leq \max_{k \in \set{0, \dots, n}} \tv{\PR[\mc_1]{\X \mid N_{d+1} = k} - \PR[\mc_2]{\X \mid N_{d+1} = k}} \\
&\leq \tv{\PR[\mc_1]{\X \mid N_{d+1} = n} - \PR[\mc_2]{\X \mid N_{d+1} = n }}.
\end{split}
\end{equation}
\end{proof}

The following
lemma shows that for the family $\G_{p_\star}$ of chains constructed in \eqref{eq:Gpi-def}, conditioned on the number of visits to state $d+1$,
it is possible to
control
the total variation between two trajectories drawn from two chains of the class in terms of the total
variation between product distributions.
	
\begin{lemma}
\label{lemma:tv-lb-make-product}
Let $\mc_{\etab_1}, \mc_{\etab_2} \in \G_{p_\star}$ defined in \eqref{eq:Gpi-def}, both started with initial distribution $\boldsymbol{p}$ defined in \eqref{eq:definition-initial-distribution-for-G}. Then, for $1 \leq n \leq \frac{m+1}{2}$,
\begin{equation}
\tv{\PR[\mc_{\etab_1}]{\X \gn N_{d+1} = n} - \PR[\mc_{\etab_2}]{\X \gn N_{d+1} = n} } \leq  \tv{\etab_1^{\otimes n} - \etab_2^{\otimes n}}.
\end{equation}
\end{lemma}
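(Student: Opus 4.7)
The plan is to exploit a strong structural feature of the family $\G_{p_\star}$: only the row indexed by $d+1$ of $\mc_\etab$ depends on $\etab$, while every other row equals the fixed vector $\boldsymbol{p}$, as does the initial distribution. Letting $\tau_1 < \cdots < \tau_n$ denote the visit times to state $d+1$ under the event $\set{N_{d+1}=n}$, and $P(\x) \eqdef \set{\tau_s+1 : 1 \leq s \leq n, \tau_s < m}$ the positions immediately following a visit, I would expand the trajectory probability as
\begin{equation*}
\PR[\mc_\etab, \boldsymbol{p}]{\X = \x}
= \boldsymbol{p}(x_1) \prod_{t=1}^{m-1}\mc_\etab(x_t, x_{t+1})
= \phi(\boldsymbol{m}(\x)) \cdot \prod_{t \in P(\x)} \etab(x_t),
\end{equation*}
where $\boldsymbol{m}(\x) \in \set{0,1}^m$ records the indicator pattern of visits to $d+1$, and $\phi$ depends only on $\boldsymbol{m}(\x)$: every $[d]$-to-$[d]$ transition contributes $(1-p_\star)/d$ and every $[d]$-to-$(d+1)$ transition contributes $p_\star$, irrespective of the specific $[d]$-state involved.

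From this factorization I read off the conditional structure of $\X$ given $\boldsymbol{m}(\X)=\boldsymbol{m}$: the post-visit coordinates $(X_t)_{t \in P(\boldsymbol{m})}$ are distributed as $\etab^{\otimes k}$ with $k \eqdef \abs{P(\boldsymbol{m})} \in \set{n-1, n}$; the remaining $[d]$-coordinates are i.i.d.\ uniform on $[d]$; and the two groups are mutually independent. Moreover, applying Lemma~\ref{lemma:factored-chain-visits} to the two-state merger collapsing $[d]$ into a single state shows that the marginal law of $\boldsymbol{m}(\X)$ is itself independent of $\etab$, so the conditional distribution of $\boldsymbol{m}(\X)$ given $N_{d+1}=n$ is common to $\mc_{\etab_1}$ and $\mc_{\etab_2}$.

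The remainder is a one-line application of the convexity of total variation against this shared marginal:
\begin{equation*}
\tv{\PR[\mc_{\etab_1}]{\X \gn N_{d+1}=n} - \PR[\mc_{\etab_2}]{\X \gn N_{d+1}=n}}
\leq \E[]{\tv{\etab_1^{\otimes k} - \etab_2^{\otimes k}} \gn N_{d+1}=n},
\end{equation*}
with $k = \abs{P(\boldsymbol{m}(\X))} \in \set{n-1, n}$. Since discarding a coordinate is a deterministic data-processing step, $\tv{\etab_1^{\otimes k} - \etab_2^{\otimes k}}$ is non-decreasing in $k$, and the right-hand side is therefore bounded by $\tv{\etab_1^{\otimes n} - \etab_2^{\otimes n}}$, as claimed. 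The main bookkeeping subtlety is the case split $x_m = d+1$ versus $x_m \neq d+1$ (which determines whether $k = n-1$ or $k = n$); the monotonicity step absorbs it cleanly, and verifying that $\phi$ genuinely depends only on the macro pattern is painless here since all $[d]$-to-$[d]$ entries share the common value $(1-p_\star)/d$.
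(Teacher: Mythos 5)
Your proof is correct, and it reaches the bound by a cleaner, more structural route than the paper. The paper proves the lemma by brute force: it expands $2\tv{\PR[\mc_{\etab_1}]{\X \gn N_{d+1}=n}-\PR[\mc_{\etab_2}]{\X \gn N_{d+1}=n}}$ as a sum over trajectories, splits into the cases $x_m=d+1$ and $x_m\neq d+1$, counts admissible visit patterns with the non-consecutive-subsets Lemma~\ref{lemma:choose-non-consecutive}, and recombines the two binomial-weighted terms against the explicit formula $P(m,n,p_\star)$ of Lemma~\ref{lemma:factored-chain-visits}, using the same monotonicity $\tv{\etab_1^{\otimes n-1}-\etab_2^{\otimes n-1}}\le\tv{\etab_1\fold{n}-\etab_2\fold{n}}$ that you invoke. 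You instead condition on the full visit pattern $\boldsymbol{m}(\X)$, observe from the factorization of the path probability that (i) the pattern's law is the same for all $\etab$ (so in particular its conditional law given $N_{d+1}=n$ is shared by the two chains), and (ii) given the pattern, the post-visit coordinates are iid $\etab$ while the remaining coordinates are iid uniform on $[d]$ and independent of them; then convexity of total variation over the shared pattern mixture, cancellation of the common uniform factors, and the data-processing monotonicity in $k\in\set{n-1,n}$ finish the job. The two proofs rest on the same underlying decoupling insight (the paper's case split $x_m=d+1$ versus $x_m\neq d+1$ is exactly your $k=n-1$ versus $k=n$), but your version dispenses with the binomial bookkeeping entirely, since the pattern marginal cancels automatically rather than having to be matched term by term against $P(m,n,p_\star)$. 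A side benefit: because $\set{N_{d+1}\le n}$ is also measurable with respect to the pattern, your argument applied verbatim to that event would subsume Lemma~\ref{lemma:conditional-upper-control-by-max} as well, whereas the paper handles that step separately. The only points worth stating explicitly if you wrote this up are the positivity of $\PR{N_{d+1}=n}$ in the stated range of $n$ (so the conditioning is legitimate, which is what Lemma~\ref{lemma:factored-chain-visits} supplies) and the observation that the shared uniform coordinates drop out of the conditional total variation because they are common independent factors.
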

\begin{proof}
Total variation and $\ell_1$ norm are equal up to a conventional factor of 2,
\begin{equation}
\begin{split}
  & 2 \tv{\PR[\mc_{\etab_1}]{\X \gn N_{d+1} = n} - \PR[\mc_{\etab_2}]{\X \gn N_{d+1} = n} } \\
	&= \sum_{\x = (x_1, \dots, x_m) \in [d+1]^m} \abs{ \PR[\mc_{\etab_1}]{\X = \x \gn N_{d+1} = n} - \PR[\mc_{\etab_2}]{\X = \x \gn N_{d+1} = n}} \\
\end{split}
.
\end{equation}
Notice now that 
\begin{equation}
\begin{split}
\PR[\mc_{\etab_1}]{\X = \x \gn N_{d+1} = n} &= \frac{\PR[\mc_{\etab_1}]{N_{d+1} = n \gn \X = \x} \PR[\mc_{\etab_1}]{\X = \x}}{\PR[\mc_{\etab_1}]{N_{d+1} = n}} \\
&= \frac{\pred{n_{d+1} = n} \PR[\mc_{\etab_1}]{\X = \x}}{\PR[\mc_{\etab_1}]{N_{d+1} = n}},
\end{split}
\end{equation}
and similarly for $\mc_{\etab_2}$, so that
\begin{equation}
\begin{split}
  &2 \tv{\PR[\mc_{\etab_1}]{\X \gn N_{d+1} = n} - \PR[\mc_{\etab_2}]{\X \gn N_{d+1} = n} } \\
	&= \sum_{\x \in [d+1]^m} \abs{\frac{\pred{n_{d+1} = n} \PR[\mc_{\etab_1}]{\X = \x}}{\PR[\mc_{\etab_1}]{N_{d+1} = n}} - \frac{\pred{n_{d+1} = n} \PR[\mc_{\etab_2}]{\X = \x}}{\PR[\mc_{\etab_2}]{N_{d+1} = n}}}.
\end{split}
\end{equation}
Invoking Lemma~\ref{lemma:factored-chain-visits}, write
$$P(m,n,p_\star) = \PR[\mc_{\etab_1}]{N_{d+1} = n} = \PR[\mc_{\etab_2}]{N_{d+1} = n}.$$ 
For $1 \leq n \leq \frac{m+1}{2}$,
\begin{equation}
\begin{split}
  &2 \tv{\PR[\mc_{\etab_1}]{\X \gn N_{d+1} = n} - \PR[\mc_{\etab_2}]{\X \gn N_{d+1} = n} } \\
	&= \frac{1}{P(m,n,p_\star)} \sum_{\x \in [d+1]^m} \pred{n_{d+1} = n} \abs{\PR[\mc_{\etab_1}]{\X = \x} - \PR[\mc_{\etab_2}]{\X = \x}} \\
  &= \frac{1}{P(m,n,p_\star)} \Bigg( \sum_{\substack{\x \in [d+1]^m \\ n_{d+1} = n \\ x_m = d+1}} \abs{\PR[\mc_{\etab_1}]{\X = \x} - \PR[\mc_{\etab_2}]{\X = \x}} \\
	& + \sum_{\substack{\x \in [d+1]^m \\ n_{d+1} = n \\ x_m \neq d+1}} \abs{\PR[\mc_{\etab_1}]{\X = \x} - \PR[\mc_{\etab_2}]{\X = \x}} \Bigg). \\
\end{split}
\end{equation}
Recall that it is impossible to visit state $d+1$ twice in a row. Computing the first sum,
\begin{equation}
\begin{split}
&\sum_{\substack{\x \in [d+1]^m \\ n_{d+1} = n \\ x_m = d+1}} \abs{\PR[\mc_{\etab_1}]{\X = \x} - \PR[\mc_{\etab_2}]{\X = \x}} \\
&= \sum_{\substack{S = (s_1, \dots, s_n) \\ S \subset [m] \\ s_n = m \\ i \neq j \implies \abs{s_i - s_j} > 1}} \sum_{(x_{s_1}, \dots, x_{s_{n-1}}) \in [d+1]^{n-1}} d^{m - 2n + 1} p_\star^{n} \left( \frac{1-p_\star}{d} \right)^{m - 2n + 1} \abs{ \prod_{k = 1}^{n-1} \etab_1(x_{s_k}) -  \prod_{k = 1}^{n-1} \etab_2(x_{s_k})} \\
&= {{(m-2) - (n-1) + 1}\choose{n-1}} p_\star^{n} \left( 1-p_\star \right)^{m - 2n + 1} \sum_{(x_{s_1}, \dots, x_{s_{n-1}}) \in [d]^{n-1}}  \abs{ \prod_{k = 1}^{n-1} \etab_1(x_{s_k}) -  \prod_{k = 1}^{n-1} \etab_2(x_{s_k})} \\
&= {{m - n}\choose{n-1}} p_\star^{n} \left( 1-p_\star \right)^{m - 2n + 1} 2 \tv{\etab_1^{\otimes n-1} - \etab_2^{\otimes n-1}}, \\
\end{split}
\end{equation}
where the second inequality is from Lemma~\ref{lemma:choose-non-consecutive}. Similarly for the second sum,
\begin{equation}
\begin{split}
&\sum_{\substack{\x \in [d+1]^m \\ n_{d+1} = n \\ x_m \neq d+1}} \abs{\PR[\mc_{\etab_1}]{\X = \x} - \PR[\mc_{\etab_2}]{\X = \x}} \\
&= \sum_{\substack{S = (s_1, \dots, s_n) \\ S \subset [m] \\ s_n \neq m \\ i \neq j \implies \abs{s_i - s_j} > 1}} \sum_{(x_{s_1}, \dots, x_{s_{n}}) \in [d+1]^{n}} d^{m - 2n} p_\star^{n} \left( \frac{1-p_\star}{d} \right)^{m - 2n} \abs{ \prod_{k = 1}^n \etab_1(x_{s_k}) -  \prod_{k = 1}^n \etab_2(x_{s_k})} \\
&= \left( {{m - n + 1}\choose{n}} - {{m - n}\choose{n-1}} \right) p_\star^{n} \left( 1-p_\star \right)^{m - 2n} \sum_{(x_{s_1}, \dots, x_{s_{n}}) \in [d]^{n}}  \abs{ \prod_{k = 1}^n \etab_1(x_{s_k}) -  \prod_{k = 1}^n \etab_2(x_{s_k})} \\
&= {{m - n}\choose{n}} p_\star^{n} \left( 1-p_\star \right)^{m - 2n} 2 \tv{\etab_1^{\otimes n} - \etab_2^{\otimes n}}.
\end{split}
\end{equation}
Hence,
\begin{equation}
\begin{split}
  & 2 \tv{\PR[\mc_{\etab_1}]{\X \gn N_{d+1} = n} - \PR[\mc_{\etab_2}]{\X \gn N_{d+1} = n} } P(m, n, p_\star) \\
	&= 2 p_\star^{n} \left( 1-p_\star \right)^{m - 2n} \left[ {{m - n}\choose{n-1}}  \left( 1-p_\star \right) \tv{\etab_1^{\otimes n-1} - \etab_2^{\otimes n-1}} + {{m - n}\choose{n}}   \tv{\etab_1^{\otimes n} - \etab_2^{\otimes n}} \right]\\
	&\leq 2 p_\star^{n} \tv{\etab_1^{\otimes n} - \etab_2^{\otimes n}}  \left( 1-p_\star \right)^{m - 2n} \left[ {{m - n}\choose{n-1}}  \left( 1-p_\star \right) + {{m - n}\choose{n}}    \right]\\	
	&= 2 \tv{\etab_1^{\otimes n} - \etab_2^{\otimes n}} P(m, n, p_\star) \text{ (Lemma~\ref{lemma:choose-non-consecutive})}. \\	
\end{split}
\end{equation}	
\end{proof}

\begin{lemma}[Cover time]
  \label{lemma:half-cover}
  For
  $\mc \in \H_{\eta}
  $
  [defined in \eqref{eq:Heta}],
  the ``half cover time'' random variable $T\hcliq$   [defined in \eqref{eq:Thcliq}]
satisfies
\begin{equation}
\begin{split}
m \leq \frac{d}{120 \eta} &\implies \PR{\thalf > m} \geq \frac{1}{5}.
\end{split}
\end{equation}
\end{lemma}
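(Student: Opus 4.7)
The plan is to bound the expected number of distinct inner-clique states visited by time $m$ and then apply Markov's inequality. Let $N_t \eqdef \abs{\set{X_1,\ldots,X_t} \cap [d/3]}$ denote the number of distinct inner-clique states visited through time $t$. By the definition of $\thalf$, the event $\set{\thalf > m}$ coincides with $\set{N_m < d/6}$, so it suffices to show that $\PR{N_m < d/6} \geq 1/5$ whenever $m \leq d/(120\eta)$.

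The central claim I would establish is that at every step, the conditional probability of adding a new inner-clique state to the visited set is at most $\eta$. This would be proved by case analysis on $X_{t-1}$. If $X_{t-1}$ is an inner-clique state, then inspection of $C_\eta$ shows that the total off-diagonal mass of its row, which is the probability of transitioning to any other inner-clique state, equals exactly $\eta$. If $X_{t-1}$ is an outer node, then the block structure of $R_{\btau}\trn$ and the diagonality of $L_\btau$ imply that the only available transitions are the self-loop and a jump back to the \emph{unique} inner-clique neighbor of $X_{t-1}$. Moreover, this neighbor is the only route by which $X_{t-1}$ could ever have been reached, and since the chain is started on $\bmu$, which is supported on $[d/3]$, a simple induction on $t$ shows that this neighbor must already belong to the visited set. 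Hence no new inner-clique visit occurs. Making the reachability argument for outer nodes airtight is the step I expect to require the most care.

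Granted the claim, let $J_t \eqdef \pred{N_t > N_{t-1}}$ be the indicator of a new inner-clique visit at time $t$. Then $\E{J_t \mid \mathcal{F}_{t-1}} \leq \eta$, so that $\E{N_m} \leq 1 + (m-1)\eta \leq 1 + d/120$. Markov's inequality applied to $N_m - 1 \geq 0$ gives
\[
\PR{N_m \geq d/6} \;=\; \PR{N_m - 1 \geq d/6 - 1} \;\leq\; \frac{d/120}{d/6 - 1} \;=\; \frac{d}{20(d-6)} \;\leq\; \frac{1}{10},
\]
where the final inequality uses $d \geq 12$, which holds since $d = 6k$ with $k \geq 2$. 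Therefore $\PR{\thalf > m} = \PR{N_m < d/6} \geq 9/10 \geq 1/5$, as required.
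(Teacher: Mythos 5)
Your proof is correct, and it takes a genuinely different route from the paper. The paper collapses each outer rim state into its inner-clique neighbor to obtain an auxiliary chain $\mc_I$ on $[d/3]$, argues that $\thalf$ stochastically dominates the half cover time of $\mc_I$, writes that half cover time as a sum of independent geometric increments (a coupon-collector decomposition), computes its mean and variance, and concludes with the Paley--Zygmund inequality. You instead bound the process directly: the per-step conditional probability of discovering a new inner-clique state is at most $\eta$ (exactly the off-diagonal row mass of $C_\eta$ when the current state is inner, and zero from an outer node because its unique inner neighbor must already have been visited, using that the initial distribution is supported on $[d/3]$ --- the one point that indeed needs the care you flag, and which holds since the only states with positive transition probability into an outer node are that node itself and its inner neighbor), giving $\E{N_m}\leq 1+(m-1)\eta$ and then a first-moment (Markov) bound. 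Your argument is more elementary --- no collapsed chain, no variance computation, no Paley--Zygmund --- and it yields the stronger conclusion $\PR{\thalf>m}\geq 9/10$ under the same hypothesis $m\leq d/(120\eta)$, which of course implies the stated $1/5$. Its only (mild) loss of generality relative to the paper's domination argument is the explicit reliance on the initial distribution being supported on the inner clique, but that is exactly the setting in which the lemma is invoked, so nothing is lost for the application.
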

\begin{proof}
  The proof
pursues a strategy similar to
\citet{wolfer2018minimax},
which is adapted to ``half'' rather than ``full'' coverings.
  Let $\mc\in\H_\eta$ and $\mc_I \in \M_{d/3}$ be
  such that $\mc_I$ consists only in the inner clique of $\mc$,
  and each outer rim state got absorbed into its unique inner clique neighbor:
  $$\mc_I = \begin{pmatrix}
1 - \eta & \frac{\eta}{d/3 - 1} & \hdots & \frac{\eta}{d/3 - 1} \\
\frac{\eta}{d/3 - 1} & 1 - \eta & \ddots & \vdots \\
\vdots & \ddots & \ddots & \frac{\eta}{d/3 - 1} \\
\frac{\eta}{d/3 - 1} & \hdots & \frac{\eta}{d/3 - 1} & 1 - \eta \\
  \end{pmatrix}.$$
  By construction, it is clear that $\thalf$ is
almost surely
greater than the half cover time of $\mc_I$.
The latter
corresponds to a generalized coupon half collection time
$U\hcover = 1 + \sum_{i=1}^{d/6 - 1}U_i$ where $U_i$
is
the time increment between the $i$th and the $(i+1)$th
unique visited state.
Formally,
if $\X$ is a random walk according to $\mc_I$ (started from any state),
then
$U_{1} = \min \{ t > 1 : X_t \neq X_1 \}$
and
for $i>1$,
\begin{equation}
  U_{i} = \min \{ t >
  1
  :
  X_t \notin \{ X_1, \dots, X_{U_{i-1}} \} \} - U_{i-1}.
\end{equation}
The random variables $\thit{1}, \thit{2}, \dots, \thit{d/6 - 1}$
are independent and
$\thit{i} \sim \Geometric \left( \eta - \cfrac{(i-1)\eta}{d/3} \right)$,
whence
\begin{equation}
\begin{split}
  \E{\thit{i}} = \frac{d/3}{\eta(d/3 - i + 1)},
\qquad
  \Var{\thit{i}} = \cfrac{1 - \left( \eta - \cfrac{(i-1)\eta}{d/3} \right)}{\left( \eta - \cfrac{(i-1)\eta}{d/3} \right)^2}
\end{split}
\end{equation}
and
\begin{equation}
\begin{split}
  \E{U\hcover} \geq 1 + \cfrac{d/3}{\eta} (\sigma_{d/3} - \sigma_{d/6}),
\qquad
  \Var{U\hcover} \leq \cfrac{(d/3)^2}{\eta^2} \cfrac{\pi^2}{6},
\end{split}
\end{equation}
where $\sigma_{d} = \sum_{i = 1}^{d} \frac{1}{i}$, and $\pi = 3.1416\dots$.
Since
$\ln{(d+1)} \leq \sigma_d \leq 1 + \ln{d}$,
and for $d = 6k, k \geq 2$,
we have
$\sigma_{d} - \sigma_{d/2} \geq \ln{2}$
it follows that
\begin{equation}
\begin{split}
  \E{U\hcover} \geq \cfrac{d}{\eta} \cfrac{\ln{2}}{3} ,
\qquad
  \Var{U\hcover} \leq \cfrac{d^2}{\eta^2} \cfrac{\pi^2}{54}.
\end{split}
\end{equation}
Invoking
the Paley-Zygmund inequality with $\theta = 1 - \frac{\sqrt{15}}{6 \ln{2}}$,
yields
\begin{equation}
\begin{split}
  \PR{U\hcover > \theta\E{U\hcover}} \geq
  \paren{1 + \cfrac{\Var{U\hcover}}{(1 - \theta)^2 (\E{U\hcover})^2}}\inv \geq \frac{1}{5},
\end{split}
\end{equation}
so that for $m \leq \frac{d}{120 \eta}$
we have
$ \PR{\thalf > m } \geq \frac{1}{5}$.
\end{proof}

\section{Computational complexity}
\label{section:computational-complexity}
\paragraph{Time complexity of designing a test.}

In order to determine the size of the sample required in order to reach desired proximity $\eps$ and confidence $1-\delta$, and to run the test algorithm, one must first compute $\bpi$ and $\tmix$. These two quantities are relative to the reference Markov chain, for which we assume to have a full description of the transition matrix $\refmc$. Interestingly they can be computed \emph{offline}, only once per test definition, no matter how many observed trajectories we will want to test. 

The eigenproblem
$\bpi \refmc = \bpi$
can be solved in time
$\tilde \bigO (d^\omega)$, where $2 \leq \omega \leq 2.3728639$ \citep{le2014powers};
this is a method for recovering $\bpi$.

In the reversible case, it is well known \citep{levin2009markov} that the mixing time of an ergodic Markov chain is controlled by its \emph{absolute spectral gap} $\asg \eqdef 1 - \max \set{\lambda_2, \abs{\lambda_d}}$, where $\lambda_1 \geq \lambda_2 \geq \dots \geq \lambda_d$ is the ordered spectrum of $\refmc$, and minimum stationary probability $\pimin$:
\begin{equation}
\label{eq:control-tmix-reversible}
\left( \frac{1}{\asg} - 1 \right) \ln{2} \leq \tmix \leq\frac{\ln \left( 4/\pimin \right)}{\asg}.
\end{equation}
The full eigen-decomposition used to obtain $\bpi$ of cost $\tilde \bigO (d^\omega)$ also yields $\lambda_\star$.

In the non-reversible case, the relationship between the spectrum and the mixing time is not nearly as straightforward, and it is the \emph{pseudo-spectral gap} \citep{paulin2015concentration}, 
\begin{equation}
\pssg \eqdef \max_{k \in \N} \set{ \frac{\sg \left( (\refmc^\dagger)^k \refmc^k \right)}{k}},
\end{equation}
where $\refmc^\dagger$ is the time reversal \citep{fill1991eigenvalue} of $\refmc$, that gives effective control:
\begin{equation}
\label{eq:control-tmix-non-reversible}
  \frac{1}{2\pssg}\leq \tmix \leq \frac{1}{\pssg} \left(
  \ln \frac{1}{\pimin} 
  + 2 \ln 2 +1 \right).
\end{equation}
For any $k \in \N$, we have
$\frac{\sg \left( (\refmc^\dagger)^k \refmc^k \right)}{k} \leq 1/k$,
and so 
$\pssg$
is computable to within an additive error of $\eta$ in time $ \bigO ( d^3 /\eta)$.

The instance-specific test requires computing the $d$ values 
$\nrm{\refmc(i, \cdot)}_{2/3}$,
which is feasible in time $\bigO(d^2)$.

\paragraph{Time and space complexity of performing a test.}
The first step is to verify whether each state has been visited a sufficient amount of times $n_i, i \in [d]$,
which can be achieved in time $\bigO (m + d)$ and space $\bigO (d)$.
The second step
is to proceed with the state-wise testing strategy, whose runtime will
depend on the black-box iid tester.
For instance, the test of \citet{valiant2017automatic} can be performed in $\bigO (m + d)$ operations.
Using this as the iid tester, we get an overall time complexity of
$\bigO(d(d+m))$.

\section{Comparison with existing work}
\label{section:appendix-comparison}
Let us compare our distance $\TV{\cdot}$ to the one of \citet{daskalakis2017testing}:
\begin{equation}
\begin{split}
\dkaz{\mc, \mc'} = 1 - \rho\left( \left[ \mc , \mc' \right]_{\sqrt{\phantom{x}}} \right),
\end{split}
\end{equation}
where $\paren{\left[ \mc , \mc' \right]_{\sqrt{\phantom{x}}}}(i,j) = \left[\sqrt{\mc(i,j)\mc'(i,j)} \right] $ 
and $\rho$ is the spectral radius.

\begin{lemma}
  \label{lemma:comparison-daskalakis}
  For all
  $\mc,\mc'\in\M_d$,
  $\TV{\mc - \mc'} \geq 2 \dkaz{\mc, \mc'}$.
  Conversely, for $d\ge4$,
  there exist
  $\mc,\mc'\in\M_d$
  such that $\dkaz{\mc, \mc'} = 0$
  while
  $\TV{\mc - \mc'} =1$.

\end{lemma}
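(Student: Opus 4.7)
The lemma splits into two independent halves, and I would handle them by separate short arguments: a row-by-row reduction for the forward inequality, and an explicit block construction for the converse.

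\textbf{Forward inequality.} My strategy is to localize both sides to scalar row-level quantities, namely the Bhattacharyya affinities $BC_i \eqdef \sum_j \sqrt{\mc(i,j)\mc'(i,j)}$, which are exactly the row sums of the nonnegative matrix $B \eqdef [\mc,\mc']_{\sqrt{\phantom{x}}}$. First I would invoke the Collatz--Wielandt lower bound on the spectral radius, applied to $B$ with the all-ones test vector: since $(B\mathbf{1})_i = BC_i \geq \min_i BC_i$ entrywise, one obtains $\rho(B) \geq \min_i BC_i$, and therefore $\dkaz(\mc, \mc') \leq \max_i (1 - BC_i)$. Next I would appeal to the elementary pointwise inequality $\sqrt{ab} \geq \min(a,b)$ for $a,b \geq 0$, combined with the standard identity $\sum_j \min(p_j, q_j) = 1 - \tfrac{1}{2}\|p - q\|_1$ for probability vectors $p, q$: these together give the Bhattacharyya-to-TV comparison $1 - BC(p, q) \leq \tfrac{1}{2}\|p - q\|_1$. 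Applying this row-by-row to $\mc(i,\cdot)$ and $\mc'(i,\cdot)$ and taking the maximum matches the definition of $\TV{\mc - \mc'}$, yielding the stated inequality up to the normalization constant.

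\textbf{Converse separating example.} I would exhibit a concrete $d = 4$ construction and extend by padding. Let $\mc$ and $\mc'$ both act as the identity on states $\{1, 2\}$; on states $\{3, 4\}$, let $\mc$ continue to act as the identity while $\mc'$ transposes $3$ and $4$. Then $B = [\mc,\mc']_{\sqrt{\phantom{x}}}$ is block-diagonal with upper block $I_2$ and lower block zero, so $\rho(B) = 1$ and hence $\dkaz(\mc,\mc') = 0$. Simultaneously, the third and fourth rows of $\mc$ and $\mc'$ are supported on disjoint singletons, each contributing row-wise $\ell_1$ distance $2$, so $\TV{\mc - \mc'}$ attains its maximum value $1$. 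For $d > 4$, extend both matrices by identical identity transitions on the extra $d - 4$ states; this leaves $\rho(B)$ unchanged and adds only zero rows to $\mc - \mc'$, preserving both properties.

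\textbf{Main obstacle.} Neither half is technically deep: the forward inequality reduces to a one-line chain combining Collatz--Wielandt with a Bhattacharyya-to-TV comparison, and the converse is an explicit construction. The only delicate point is tracking normalization conventions---the choice of $\tv{\cdot}$ and the outer factor $\tfrac{1}{2}$ in the definition of $\TV{\cdot}$---so that the constant in front of $\dkaz(\mc,\mc')$ on the right-hand side matches the one stated in the lemma.
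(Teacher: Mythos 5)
Your proposal is correct and takes essentially the same route as the paper's proof: the forward inequality via the row-wise Bhattacharyya/Hellinger comparison $1-\sum_{j}\sqrt{\mc(i,j)\mc'(i,j)}\le\tfrac{1}{2}\nrm{\mc(i,\cdot)-\mc'(i,\cdot)}_1$ together with the Perron--Frobenius (Collatz--Wielandt) fact that the spectral radius of $\left[\mc,\mc'\right]_{\sqrt{\phantom{x}}}$ is at least its minimum row sum, and the converse via two stochastic matrices sharing an identical connected component (forcing spectral radius $1$, hence $\dkaz{\mc,\mc'}=0$) while differing maximally on the complementary block, just as in the paper's $4$-state example. The normalization caveat you flag is real but harmless: the factor $2$ in the statement corresponds to measuring each row in the full $\ell_1$ norm (which is what the paper's own proof does), and your chain of inequalities produces exactly that constant once this convention is fixed.
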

\begin{proof}
For $\bmu,\bmu'\in\Delta_d$, define the Hellinger distance
\begin{equation}
\begin{split}
H^2(\bmu, \bnu) = \frac{1}{2} \sum_{i=1}^{d} \left( \sqrt{\bmu(i)} -\sqrt{\bnu(i)} \right).
\end{split}
\end{equation}
For arbitrary $\mc,\mc'\in\M_d$, a standard calculation yields
\begin{equation}
\begin{split}
\TV{\mc - \mc'} &= \max_{i \in [d]} \tv{\mc(i,\cdot) - \mc'(i,\cdot)} \geq 2 \max_{i \in [d]} H^2\left(\mc(i,\cdot), \mc'(i,\cdot)\right) \\
&= 2 \max_{i \in [d]} \left(1 - \sum_{j=1}^{d}\sqrt{\mc(i,j)\mc'(i,j)}\right) = 2 \left(1 - \min_{i \in [d]} \sum_{j=1}^{d}\sqrt{\mc(i,j)\mc'(i,j)}\right) \\
&\geq 2 \left(1 - \rho\left( \left[ \mc , \mc' \right]_{\sqrt{\phantom{x}}} \right)\right) = 2 \dkaz{\mc, \mc'},
\end{split}
\end{equation}
where the second inequality follows from the Perron-Frobenius theorem \citep[Chapter 8]{MR1777382}. For the second claim, choose any $\theta \in [-1,1]$ and put
\begin{equation*}
\begin{split}
\mc = \frac{1}{2}
  \left( {\begin{array}{cccc}
   1 & 1 & 0 & 0 \\
   1 & 1 & 0 & 0 \\
   0 & 0 & 1 & 1 \\
   0 & 0 & 1+\theta & 1-\theta \\
  \end{array} } \right)
  ,
  \qquad
\mc'= \frac{1}{2}
  \left( {\begin{array}{cccc}
   1 & 1 & 0 & 0 \\
   1 & 1 & 0 & 0 \\
   0 & 0 & 1 & 1 \\
   0 & 0 & 1 & 1 \\
  \end{array} } \right).
\end{split}
\end{equation*}
Since they have an identical connected component, $\dkaz{\mc, \mc'} = 0$, whereas $\TV{\mc - \mc'} = |\theta|$.
\end{proof}

\begin{remark}
\label{remark:reduction-lower-bound}
It follows that $\eps$-identity testing with respect $\dkaz{\cdot, \cdot}$ reduces to $2 \eps$-identity testing with respect to $\TV{\cdot}$. In particular, the results of \citet{daskalakis2017testing} immediately imply a lower bound of $\Omega \left( {d}/{\eps^2} \right)$ for our testing problem.
\end{remark}

\section*{Acknowledgments}
We are thankful to John Lafferty for bringing this problem to our 
attention and to Nick Gravin for the insightful conversations.

\bibliography{bibliography}
\bibliographystyle{abbrvnat}

\end{document}